\documentclass[twoside,11pt]{article}

\usepackage{blindtext}
\usepackage{amsmath}
\usepackage{tcolorbox}
\usepackage{algorithm}
\usepackage{bm}
\usepackage{algorithmic}
\usepackage{amsthm} 
\usepackage{dsfont}
\usepackage{amsfonts} 
\newtheorem{assumption}{Assumption}
\usepackage{natbib}
\usepackage{graphicx}
\usepackage{natbib}
\usepackage{booktabs}
\usepackage{float}
\usepackage{multirow}
\usepackage{subcaption}
\tcbuselibrary{skins}
\tcbset{
    generator_box/.style={
        enhanced,
        colback=yellow!10,
        colframe=yellow!50!black,
        fonttitle=\bfseries,
        title=Generator Prompt,
        attach boxed title to top left={yshift=-\tcboxedtitleheight/2},
        boxed title style={size=small,colback=yellow!50!black,colframe=yellow!50!black},
        coltitle=white,
    },
    discriminator_box/.style={
        enhanced,
        colback=blue!10,
        colframe=blue!50!black,
        fonttitle=\bfseries,
        title=Discriminator Prompt,
        attach boxed title to top left={yshift=-\tcboxedtitleheight/2},
        boxed title style={size=small,colback=blue!50!black,colframe=blue!50!black},
        coltitle=white,
    }
}
\usepackage[preprint]{jmlr2e}
\usepackage{lastpage}
\begin{document}

\title{Uncertainty-Aware Multimodal Learning via Conformal Shapley Intervals}

\author{\name Mathew Chandy \email mathewchandy@g.ucla.edu \\
       \addr Department of Statistics and Data Science, UCLA
       \AND
       \name Michael Johnson \email michael.johnson6@merck.com \\
       \addr Merck \& Co. 
       \AND
       \name Judong Shen \email judong.shen@merck.com\\
     \addr Merck \& Co.
       \AND
       \name Devan V. Mehrotra \email  devan\_mehrotra@merck.com \\
       \addr 
       Merck \& Co.
       \AND
       \name Hua Zhou \email huazhou@ucla.edu \\
       \addr Department of Biostatistics, UCLA
       \AND
       \name Jin Zhou \email jinjinzhou@ucla.edu \\
       \addr Department of Biostatistics, UCLA
       \AND
       \name Xiaowu Dai \email  daix@ucla.edu \\
       \addr Departments of Statistics and Data Science, and of Biostatistics, UCLA}
       
\maketitle     
%\editor{My editor}

\begin{abstract}\label{sec:abstract}
Multimodal learning combines information from multiple data modalities to improve predictive performance. However, modalities often contribute unequally and in a data dependent way, making it unclear which data modalities are genuinely informative and to what extent their contributions can be trusted. Quantifying modality level importance together with uncertainty is therefore central to interpretable and reliable multimodal learning.  We introduce \emph{conformal Shapley intervals}, a framework that combines Shapley values with conformal inference to construct uncertainty-aware importance intervals for each modality. Building on these intervals, we propose a modality selection procedure with a provable optimality guarantee: conditional on the observed features, the selected subset of modalities achieves performance close to that of the optimal subset. We demonstrate the effectiveness of our approach on multiple datasets, showing that it provides meaningful uncertainty quantification and strong predictive performance while relying on only a small number of informative modalities.
\end{abstract}

\section{Introduction}
\label{sec:intro}
Multimodal machine learning integrates information from multiple heterogeneous data modalities and has become a core paradigm in modern artificial intelligence. It underpins a wide range of systems, including audio–visual speech recognition \citep{kim2025multitask, kim2025mohave}, vision–language modeling \citep{luo2025vision, sun2025ds, zhong2025time}, and multimodal large language models \citep{zhang2025mm, lian2025affect, song2025modularized}. Beyond empirical success, recent theoretical work shows that combining modalities can yield fundamentally stronger generalization guarantees than unimodal learning \citep{dai2023orthogonalized, lu2023theory}.

At the same time, the availability of multiple modalities raises a central inferential question: \emph{how much does each modality contribute to predictive performance, and how do these contributions depend on the input?} In many multimodal settings, modalities exhibit substantial redundancy or context dependent relevance, so that their marginal contributions are neither uniform nor transparent \citep{peng2025modalities, chaudhuri2025closer, ma2025improving}. Treating all modalities as equally informative can obscure the statistical role of individual data modalities and complicate principled interpretation of multimodal predictors. Consequently, quantifying modality-level importance and identifying informative subsets of modalities are essential for interpretable and reliable multimodal learning.

Shapley values, originally introduced in cooperative game theory \citep{shapley1951notes}, provide a principled framework for modality attribution by assigning each modality a contribution equal to its expected marginal gain when added to a randomly selected coalition. In multimodal learning, each modality is treated as a player, and its Shapley value measures the average change in predictive performance induced by including that modality across all subsets of modalities. Shapley values have therefore been widely adopted in machine learning for feature attribution \citep{lundberg2017unified, alkhatib2025prediction, chowdhury2025rankshap} and modality-level model selection \citep{he2024efficient}. In practice, however, Shapley values are typically computed as point estimates based on a finite training sample and a learned predictive model, and thus inherit uncertainty from both training data and model estimation. This uncertainty can lead to non-negligible variability in estimated modality contributions, particularly in multimodal settings where modalities are highly correlated or redundant. Without explicit uncertainty quantification, differences in estimated modality importance may not be statistically meaningful and may vary substantially across inputs.

In this work, we argue that modality importance should be treated as a statistical object and equipped with finite-sample uncertainty guarantees. We introduce \emph{conformal Shapley intervals}, a framework that integrates Shapley value attribution with conditional conformal inference to construct valid, input-dependent uncertainty intervals for modality importance. Our approach provides uncertainty guarantees that are conditional on the observed features, allowing modality relevance to vary across inputs \citep{gibbs2025conformal}. This yields a strictly stronger form of uncertainty quantification than existing inference methods for Shapley values \citep{watson2023explaining}, which provide only marginal coverage guarantees and do not condition on covariates.

Building on these uncertainty intervals, we develop a modality selection procedure with a provable optimality guarantee. Specifically, conditional on the observed features, the selected subset of modalities achieves predictive performance close to that of the optimal subset. This establishes a direct connection between uncertainty-aware attribution and principled decision making in multimodal learning. We validate our approach on synthetic regression data, image classification with MNIST, and a real Alzheimer’s disease dataset, demonstrating that conformal Shapley intervals provide valid uncertainty quantification and strong predictive performance while frequently relying on only a small number of informative modalities.

\section{Background}
\label{sec:preliminaries}

\subsection{Traditional Shapley Values}
\label{subsec:shapley}

Consider a cooperative game with players indexed by $[p]=\{1,\dots,p\}$. 
For any coalition $S \subseteq [p]$, let $\mathrm{val}(S)\in\mathbb R$ denote the value achieved by coalition $S$. 
The Shapley value \citep{shapley1951notes} of player $j\in[p]$ is defined as 
\begin{equation}
\label{eqn:tradshapley}
\varphi_j
=
\sum_{S \subseteq [p]\setminus\{j\}}
\frac{|S|!(p-|S|-1)!}{p!}
\bigl[
\mathrm{val}(S\cup\{j\})-\mathrm{val}(S)
\bigr].
\end{equation}
The term $\frac{|S|!(p-|S|-1)!}{p!}$  is a weight based on the probability of player $j$ being added to a certain coalition $S$. 
This weighted average measures the average contribution
of player j to a coalition which does not include player $j$.

In multimodal learning, each \emph{modality} is treated as a player and the value function is typically defined in terms of predictive performance. Shapley values therefore provide a principled approach to modality attribution, accounting for interactions among modalities.

\subsection{Multimodal Learning}
\label{subsec:multimodal}
We consider a supervised multimodal learning setting with $p$ modalities. Let
$\mathcal{X} = \mathcal{X}_1 \times \cdots \times \mathcal{X}_p$
denote the modality space, where $\mathcal{X}_j$ corresponds to modality $j$, and let $Y \in \mathcal{Y}$ be the response variable. We observe i.i.d. data
$\{(x_i, y_i)\}_{i=1}^n$, where $x_i = (x_{i1}, \dots, x_{ip}) \in \mathcal{X}$.
For any subset of modalities $S \subseteq [p]$, we write $x_{i,S} = \{x_{ij} : j \in S\}$ for the restriction of the input to modalities in $S$. 

Let $\mathcal{A}$ denote a learning algorithm that maps a training dataset restricted to modalities $S \subseteq [p]$ to a predictor
$\hat\mu_S = \mathcal{A}\bigl(\{(x_{i,S}, y_i)\}_{i \in \mathcal I_1}\bigr) \in \mathcal F$,
where $\mathcal I_1 \subseteq \{1,\dots,n\}$ indexes the training data and $\mathcal F$ denotes the function class of predictors. Given a loss function $\ell : \mathcal{Y} \times \mathcal{Y} \to \mathbb{R}_+$, the predictive performance of $\hat\mu_S$ at an observation $(x_i,y_i)$ is evaluated by $\ell(y_i, \hat\mu_S(x_{i,S}))$.

\begin{figure}
    \centering
    \includegraphics[width=0.75\linewidth]{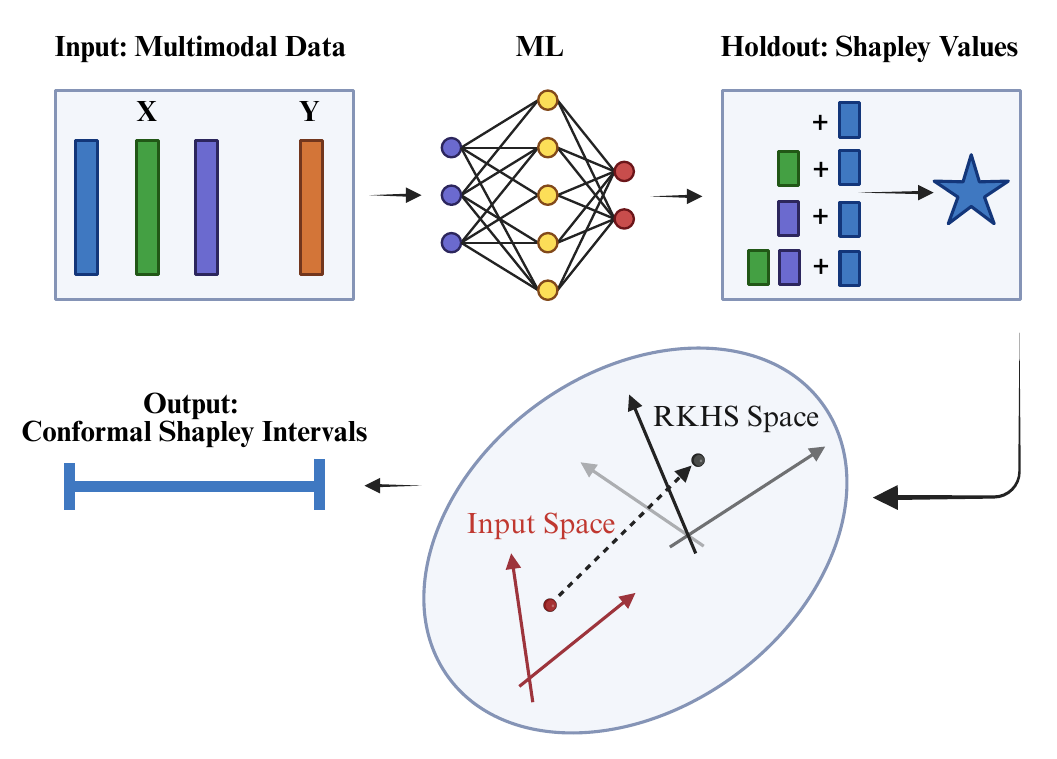}
    \caption{Diagram of conformal Shapley intervals in multimodal learning.}
    \label{fig:shap_mml_diagram}
\end{figure}

\subsection{Conformal Inference}
\label{subsec:conformal}
Conformal inference provides distribution-free, finite-sample guarantees for uncertainty quantification without requiring assumptions on the data distribution \citep{shafer2008tutorial}. 
A common implementation is split conformal inference \citep{lei2018distribution}. The data are randomly partitioned into a training set $\mathcal I_1$ and a calibration set $\mathcal I_2$. A predictor $\hat \mu$ is trained on $\mathcal I_1$, and conformity scores are computed on $\mathcal I_2$. For regression, a standard conformity score is
$s_i = \bigl| y_i - \hat \mu(x_i) \bigr|$, for $i \in \mathcal I_2$.
Prediction intervals are then formed by calibrating the empirical quantile of $\{s_i\}_{i \in \mathcal I_2}$. Given exchangeable data, conformal methods construct prediction sets with guaranteed finite-sample coverage under minimal conditions. 
While split conformal inference guarantees marginal coverage, the resulting prediction intervals do not adapt to the observed features. 

However, in many applications such as multimodal learning, uncertainty is inherently input dependent, and marginal guarantees can be overly conservative or misleading for individual inputs. Conditional conformal inference \citep{gibbs2025conformal} addresses this limitation by estimating feature-dependent quantiles of the conformity scores, yielding prediction sets whose coverage adapts to the observed covariates. This produces uncertainty guarantees that are conditional on the input.
In this work, we extend conditional conformal inference to modality-level Shapley values, and to leverage the confidence intervals to guide principled modality selection with theoretical guarantees.

\section{Conformal Shapley Intervals}
\label{sec:method}
\subsection{Prediction-Based Value Functions}
\label{sec:value}
We define the value of a modality subset $S \subseteq [p]$ at an observation $(x_i, y_i)$ as
\begin{equation}
\label{eqn:defval}
\mathrm{val}(x_i, y_i; S)
=
\ell(y_i, \hat\mu_\emptyset)
-
\ell(y_i, \hat\mu_S(x_{i,S})),
\end{equation}
where $\hat\mu_\emptyset$ denotes a baseline predictor that does not use any modalities, and $\hat\mu_S$ denotes a predictor trained using modalities in $S$. This value function measures the reduction in prediction loss achieved by including modalities in $S$ relative to the baseline.

To understand the population-level behavior of the value function in \eqref{eqn:defval}, we consider its expectation $\mathbb{E}[\mathrm{val}(X,Y;S)]$. Suppose that both $\hat\mu_\emptyset$ and $\hat\mu_S$ are optimal predictors within their respective function classes, 
$\hat\mu_\emptyset = \arg\min_{c \in \mathcal{Y}} \mathbb{E}[\ell(Y,c)]$ and $\hat\mu_S = \arg\min_{\mu_S \in \mathcal{F}} \mathbb{E}[\ell(Y,\mu_S(X_S))]$.
Then
\begin{equation}
\label{eqn:utilityofval}
\begin{aligned}
& \mathbb{E}[\mathrm{val}(X,Y;S)]=
\mathbb{E}[\ell(Y,\hat\mu_\emptyset)]
-
\mathbb{E}[\ell(Y,\hat\mu_S(X_S))] \\
&\quad\quad =
\inf_{c \in \mathcal{Y}} \mathbb{E}[\ell(Y,c)]
-
\inf_{\mu_S \in \mathcal{F}} \mathbb{E}[\ell(Y,\mu_S(X_S))].
\end{aligned}
\end{equation}
Note that the expected value in \eqref{eqn:utilityofval} coincides with the utility function for multimodal learning studied by \citet{he2024efficient}. This utility is monotone non-decreasing in the modality set 
$S$, implying that adding modalities cannot decrease expected predictive performance.

Moreover, under commonly used loss functions, the utility in \eqref{eqn:utilityofval} admits additional structure. In classification with cross-entropy loss, it coincides with the mutual information between the modality subset $S$ and the target $Y$. In regression with quadratic loss, \eqref{eqn:utilityofval} reduces to the variance of the conditional expectation, $\mathrm{Var}(\mathbb{E}[Y \mid S])$. These characterizations motivate the use of the value function in \eqref{eqn:defval} as the basis for Shapley-value--based modality importance.

\subsection{Construction of Conformal Shapley Intervals}
\label{sec:conditional}

Fix a modality $j \in [p]$. For each calibration point $i \in \mathcal I_2$, we compute the instance-level Shapley value
\begin{equation}
\label{eqn:definsshapley}
\begin{aligned}
\varphi_j(x_i)
=
& \sum_{S \subseteq [p]\setminus\{j\}}
 \frac{|S|!(p-|S|-1)!}{p!} \cdot
\Bigl[
\mathrm{val}(x_i, y_i; S \cup \{j\})
-
\mathrm{val}(x_i, y_i; S)
\Bigr].
\end{aligned}
\end{equation}
The value function in \eqref{eqn:definsshapley} is defined by \eqref{eqn:defval}.
Rather than treating $\varphi_j$ as a fixed quantity, as in the classical Shapley value \eqref{eqn:tradshapley}, we view $\varphi_j(X)$ in \eqref{eqn:definsshapley} as a random variable whose distribution depends on the input $X$. Our goal is to construct uncertainty intervals for $\varphi_j(x_{n+1})$ with finite-sample coverage conditional on the observed features $X=x_{n+1}$.

To estimate conditional quantiles of $\varphi_j(X)$, we use reproducing kernel Hilbert spaces (RKHS), which offer flexibility and computational tractability \citep{wahba1990spline, dai2023kernel}. Let $K:\mathcal X\times\mathcal X\to\mathbb R$ be a positive definite kernel with associated RKHS $\mathcal H_K$. To allow for additional finite-dimensional structure, let $\Omega:\mathcal X\to\mathbb R^d$ denote a feature map orthogonal to $\mathcal H_K$. We consider the function class
$\mathcal H
=
\bigl\{
h(\cdot)=h_K(\cdot)+\Omega(\cdot)^\top\beta
:
h_K\in\mathcal H_K,\;
\beta\in\mathbb R^d
\bigr\}$.
For a quantile level $\tau\in(0,1)$, we estimate the conditional $\tau$-quantile function $h^\tau_{\varphi_j}\in\mathcal H$ by solving the regularized empirical risk minimization problem
\begin{equation}
\label{eqn:gibbsmin}
\begin{aligned}
\hat h^{\tau}_{\varphi_j}
 =
\arg\min_{h \in \mathcal{H}}
\Bigg\{
 \frac{1}{|\mathcal I_2|}
& \sum_{i \in \mathcal I_2\cup \{n+1\}}
\ell_{\tau}\bigl(h(x_i), \varphi_j(x_i)\bigr) + 
\lambda_1 \|h_K\|_{K}^2 + \lambda_2||\beta||^2_2
\Bigg\},
\end{aligned}
\end{equation}
where $\ell_\tau$ denotes the pinball loss,
$\ell_\tau(u,v)
=
(\tau-\mathbf 1\{v\le u\})(v-u)$,
and $\lambda_1,\lambda_2>0$ are regularization parameters. Following \citet{gibbs2025conformal}, the inclusion of a symbolic test point $(x_{n+1},\varphi_j(x_{n+1}))$ preserves exchangeability and enables conditional conformal guarantees.

\begin{algorithm}[t]
    \begin{algorithmic}
    \caption{Conformal Shapley Intervals for Modalities}\label{alg:conshapley}
        \STATE {\bfseries Input:} Data $(x_i, y_i), i=1,...,n$, new feature $x_{n+1}$, miscoverage level $\alpha \in (0,1)$, learning algorithm $\mathcal{A}$.
        \STATE {\bfseries Output:} Conformal Shapley interval for each modality $j \in [p]$.
        \STATE Randomly split $\{1,...,n\}$ into two equal sized subsets $\mathcal{I}_1, \mathcal{I}_2$ of length $m = n/2$.
        \STATE Compute $\hat \mu_S =\mathcal{A}(\{(x_{i,S},y_i):i\in\mathcal{I}_1\})$, $S \subseteq [p]$.
        \STATE Compute $\varphi_j(x_i), i \in \mathcal I_2, j\in [p]$ by \eqref{eqn:definsshapley}.
        \STATE Estimate $\hat h_{\phi_j}^{\alpha/2q}$ and $\hat h_{\phi_j}^{1-\alpha/2q}$ by \eqref{eqn:gibbsmin}, $j\in[p]$.
        \STATE Compute and output interval $\hat C_j(x_{n+1})$  by \eqref{eqn:defconfshapley} for $j\in[p]$.
    \end{algorithmic}
    
\end{algorithm}

Finally, given a miscoverage level $\alpha\in(0,1)$ and a target upper bound $q$ on the number of selected modalities, we set $\tau=\alpha/(2q)$ and $\tau=1-\alpha/(2q)$ in \eqref{eqn:gibbsmin} to estimate lower and upper conditional quantile functions. We then define the \emph{conformal Shapley interval} for modality $j\in[p]$ as
\begin{equation}
\label{eqn:defconfshapley}
\widehat C_j(x_{n+1})
=
\Bigl[
\hat h^{\alpha/(2q)}_{\varphi_j}(x_{n+1}),
\;
\hat h^{1-\alpha/(2q)}_{\varphi_j}(x_{n+1})
\Bigr].
\end{equation}
We show in Section~\ref{sec:theory} that these intervals provide finite-sample uncertainty quantification for modality-level Shapley values that is conditional on the observed input and form the basis of our uncertainty-aware modality selection procedure. The procedure for computing conformal Shapley intervals is summarized in Algorithm~\ref{alg:conshapley}.

\subsection{Modality Selection and Inference via Conformal Shapley Intervals}
\label{subsec:selection_inference}
First, let $S_q^\star(X) = \arg\max_{S:\,|S|\le q} \mathbb{E}[\mathrm{val}(X,Y;S)]$ denote the optimal subset of at most $q$ modalities in expectation.
Our goal is to construct an input-dependent approximation to $S_q^\star(X)$ using conformal Shapley intervals in \eqref{eqn:defconfshapley}.
For a test input $x_{n+1}$, we rank modalities according to their upper conditional quantile estimates
$\hat h_{\varphi_j}^{\,1-\alpha/(2q)}(x_{n+1})$.
We then select up to $q$ modalities with the largest scores, retaining only those whose estimated upper quantile is positive:
\begin{equation*}
\begin{aligned}
\hat S_q^\alpha(x_{n+1})
=
\Bigl\{ j \in [p] :
& \hat h_{\varphi_j}^{\,1-\alpha/(2q)}(x_{n+1}) \ge
\hat h_{\varphi_{(q)}}^{\,1-\alpha/(2q)}(x_{n+1})
> 0
\Bigr\},
\end{aligned}
\end{equation*}
where $\hat h_{\varphi_{(q)}}^{\,1-\alpha/(2q)}(x_{n+1})$ denotes the $q$-th order statistic among
$\{\hat h_{\varphi_j}^{\,1-\alpha/(2q)}(x_{n+1})\}_{j=1}^p$.
This procedure yields an input-dependent modality set of size at most $q$.

Second, our framework also supports conditional hypothesis testing for modality importance, allowing the computation of $p$-values when conditioning on a subset of the original features or on an auxiliary set of covariates, denoted by $Z$.
Let the conditional median estimator be $\hat h_{\varphi_j}^{0.5}(\cdot)$ by \eqref{eqn:gibbsmin}, which is trained using the calibration covariates $\{z_i\}_{i \in \mathcal I_2}$.
Let $Z'_i \subseteq Z_i$ denote a subset of covariates fixed at a target level $\zeta$.
We construct modified covariates
$Z_i^* = Z_i \setminus Z'_i \cup \zeta,  i \in \mathcal I_2$,
and evaluate $\hat h_{\varphi_j}^{0.5}(z_i^*)$ for each $i \in \mathcal I_2$.
These values form a sample of conditional Shapley value point estimates under the constraint $Z'=\zeta$. We form a null hypothesis of modality importance \citep{lei2018distribution} as
\[
H_0:\;
\mathbb{E}\!\left[\hat h_{\varphi_j}^{0.5}(Z_{n+1}^*) \mid \hat\mu \right] = 0.
\]
We can apply a standard $t$-test.
Let 
\begin{equation*}
    \bar h = \frac{1}{m}\sum_{i \in \mathcal I_2} \hat h_{\varphi_j}^{0.5}(z_i^*), \text{ and } s_h =
\sqrt{\frac{1}{m-1}\sum_{i \in \mathcal I_2}\bigl(\hat h_{\varphi_j}^{0.5}(z_i^*) - \bar h\bigr)^2},
\end{equation*}
and define the test statistic
$t_h = \frac{\bar h}{s_h/\sqrt{m}}$.
Then, the resulting $p$-value is given by $1 - F_{m-1}(t_h)$, where $F_{m-1}$ denotes the cumulative distribution function of the Student-$t$ distribution with $m-1$ degrees of freedom.

\section{Theoretical Guarantees}
\label{sec:theory}
We establish theoretical guarantees for conformal Shapley intervals under two data-generating regimes.
In the first regime, we assume
$\{(X_i,Y_i)\}_{i=1}^{n+1} \overset{\text{i.i.d.}}{\sim} P$,
and write $\mathbb P_P$ and $\mathbb E_P$ for probability and expectation under $P$.
Let $P_X$ denote the marginal distribution of $X$ and $P_{Y|X}$ the conditional distribution of $Y$ given $X$.
This regime corresponds to the standard in-distribution generalization setting.

In the second regime, we assume that the training data satisfy
$\{(X_i,Y_i)\}_{i=1}^{n} \overset{\text{i.i.d.}}{\sim} P$,
while the test covariate follows a reweighted marginal distribution
$X_{n+1} \sim \frac{f(x)}{\mathbb E_P[f(X)]}\, dP_X(x),
Y_{n+1}\mid X_{n+1} \sim P_{Y|X}$,
for a nonnegative weighting function $f$ with $\mathbb E_P[f(X)]>0$.
We write $\mathbb P_f$ and $\mathbb E_f$ for probability and expectation under this regime. This regime models covariate shift at test time, allowing domain adaptation and subgroup analysis while preserving the conditional distribution $P_{Y|X}$.

We now present our coverage result, which adapts arguments from Theorem~3 of \citet{gibbs2025conformal} to modality-level Shapley values.
\begin{lemma}[Conditional Coverage of Conformal Shapley Intervals]
\label{lem:conditional}
Let $\mathcal H$ be any vector space, and assume
that for all $f, g \in \mathcal H$, the partial derivative of
$\mathcal R(g + \epsilon f)$ with respect to $\epsilon$ exists.
If $f$ returns nonnegative values with $\mathbb E_P[f(X)] > 0$,
then the prediction set given by $\hat C_j(x^{n+1})$ satisfies the
lower bound
\begin{multline*}
    \mathbb P_f(\phi^{n+1}_j \in \hat C_j(x^{n+1})) \geq 1 - \alpha - \frac{1}{\mathbb E_P[f
(X)]}\cdot \\ \mathbb E\Big[\frac{d}{d \epsilon} \mathcal R(\hat h^{1-\alpha/2}_{\phi_j}+\epsilon f)\Big|_{\epsilon=0} \ - 
\frac{d}{d \epsilon} \mathcal R(\hat h^{\alpha/2}_{\phi_j}+\epsilon f)\Big|_{\epsilon=0}\Big].
\end{multline*}
If we suppose that $\{(X^i, Y^i)\}_{i=1}^{n+1} \overset{i.i.d.}\sim P$, then for $f \in \mathcal F$,
the following two-sided bound is also satisfied,
\begin{multline*}
    \mathbb E[f(x^{n+1})(\mathbf 1\{\phi^i_j\in \hat C_j(x^{n+1})\}-(1-\alpha))]=\\-\mathbb E \Big [\frac{d}{d\epsilon}R(\hat h^{1-\alpha/2}_{\phi_j } + \epsilon f) \Big |_{\epsilon=0}  - \frac{d}{d\epsilon}R(\hat h^{\alpha/2}_{\phi_j } + \epsilon f) \Big |_{\epsilon=0} \Big ] + \epsilon_{int},
\end{multline*}
where $\epsilon_{int}$ is an error such
that $|\epsilon_{int}| \leq \mathbb E[|f(x^i)| \mathbf 1\{\phi^i_j= \hat h^{1-\alpha/2}_{\phi_j }(x^i)\}] \ +  
\mathbb E[|f(x^i)| \mathbf 1\{\phi^i_j= \hat h^{\alpha/2}_{\phi_j }(x^i)\}]$.
\end{lemma}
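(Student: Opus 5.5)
The plan is to run the Gibbs–Cherian–Candès first-order-condition argument separately for the two one-sided quantile fits and then combine them. Write $\mathcal R$ for the regularization term in \eqref{eqn:gibbsmin}, so that $\mathcal R(h)=\lambda_1\|h_K\|_K^2+\lambda_2\|\beta\|_2^2$. Fix $\tau\in\{\alpha/2,\,1-\alpha/2\}$; in the notation of the statement, $\alpha/2$ plays the role of $\alpha/(2q)$. Let $\hat h^\tau$ be the augmented fit, i.e.\ the minimizer over $\mathcal H$ that includes the test point $(x_{n+1},\phi_j^{n+1})$.

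\textbf{Step 1 (first-order condition).} Since $\hat h^\tau$ minimizes the objective, for any direction $f\in\mathcal H$ the one-sided derivatives of $\epsilon\mapsto \mathrm{obj}(\hat h^\tau+\epsilon f)$ at $0$ satisfy $\partial_{\epsilon^+}\ge 0\ge\partial_{\epsilon^-}$. The pinball loss has subgradient $-(\tau-\mathbf 1\{v\le u\})$ in $u$ away from ties $v=u$, and $\mathcal R$ is differentiable along $f$ by assumption. This gives
\[
\frac{1}{|\mathcal I_2|}\sum_{i\in\mathcal I_2\cup\{n+1\}} f(x_i)\bigl(\mathbf 1\{\phi_j^i\le \hat h^\tau(x_i)\}-\tau\bigr)+\frac{d}{d\epsilon}\mathcal R(\hat h^\tau+\epsilon f)\Big|_{\epsilon=0}=\eta_\tau ,
\]
where $\eta_\tau$ is supported on the tie set $\{i:\phi_j^i=\hat h^\tau(x_i)\}$. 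Its magnitude is at most $\frac{1}{|\mathcal I_2|}\sum_i |f(x_i)|\mathbf 1\{\phi^i_j=\hat h^\tau(x_i)\}$. If $f\ge0$, the sign of $\eta_\tau$ is also known: the $\epsilon^+$ condition gives the inequality with $\mathbf 1\{\phi\le\hat h\}$ replaced by $\mathbf 1\{\phi<\hat h\}$ in the right direction, and the $\epsilon^-$ condition gives the reverse.

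\textbf{Step 2 (exchangeability).} The augmented fit is a symmetric function of the $m+1$ points $(x_i,\phi^i_j)$. This is because the $\phi^i_j$ are computed from $\hat\mu_S$ trained on the independent split $\mathcal I_1$, so conditional on $\mathcal I_1$ they are i.i.d. Hence
\[
\mathbb E\Bigl[\tfrac{1}{|\mathcal I_2|}\sum_i f(x_i)g_i\Bigr]=\tfrac{m+1}{m}\,\mathbb E[f(x_{n+1})g_{n+1}]
\]
for any symmetric per-point quantity $g_i$. I will absorb the factor $(m+1)/m$ into the normalization, or note that the paper's normalization is meant to be $1/(m+1)$. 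The reweighting identity $\mathbb E_P[f(X_{n+1})G]=\mathbb E_P[f(X)]\,\mathbb E_f[G]$ then converts the test-point term into a probability under $\mathbb P_f$.

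\textbf{Step 3 (combine).} Taking $\tau=1-\alpha/2$ with the nonnegative-$f$ one-sided inequality gives
\[
\mathbb P_f\bigl(\phi^{n+1}_j\le \hat h^{1-\alpha/2}(x_{n+1})\bigr)\ge 1-\alpha/2-\frac{\mathbb E[\partial\mathcal R(\hat h^{1-\alpha/2})]}{\mathbb E_P f}.
\]
Taking $\tau=\alpha/2$ gives
\[
\mathbb P_f\bigl(\phi^{n+1}_j<\hat h^{\alpha/2}(x_{n+1})\bigr)\le \alpha/2-\frac{\mathbb E[\partial\mathcal R(\hat h^{\alpha/2})]}{\mathbb E_P f}.
\]
Subtracting the second from the first yields the first display. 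For the two-sided i.i.d.\ statement with general $f$, I use the exact equality from Step 1 for both quantile levels and subtract. The $\eta$ terms combine into $\epsilon_{int}$, bounded by the two tie expectations, and $\mathbf 1\{\phi\le \hat h^{1-\alpha/2}\}-\mathbf 1\{\phi<\hat h^{\alpha/2}\}$ is exactly the coverage indicator.

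\textbf{Main obstacle.} The delicate step is Step 1's sign handling at ties, together with the fact that $\hat C_j$ must be defined via the augmented fit, which depends on the unknown $\phi^{n+1}_j$. I would handle the latter as Gibbs et al.\ do: define coverage through the augmented fit, and note that the test point lies in the interval iff its score is below the augmented quantile. I would also need to make sure $\hat h^{\alpha/2}\le\hat h^{1-\alpha/2}$ at $x_{n+1}$, or else interpret the interval event as the difference of the two one-sided events. The latter suffices for the stated bounds.
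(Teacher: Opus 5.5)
Your proposal is correct and follows essentially the same route as the paper. Both arguments take the subgradient first-order condition of the augmented quantile fit in the direction $f$, use exchangeability of the $m+1$ calibration-plus-test points to pass from the empirical average to the test point, and combine the upper- and lower-level conditions, with tie terms signed when $f\ge 0$ and bounded in absolute value otherwise. Your version is somewhat cleaner: pairing $\mathbf 1\{\phi\le\hat h^{1-\alpha/2}\}$ with $\mathbf 1\{\phi<\hat h^{\alpha/2}\}$ gives both tie contributions a favorable sign, whereas the paper's write-up drops a subtracted lower-level tie term and uses $\mathbf 1\{\varphi<\hat h^{\alpha/2}_{\varphi_j}\}=1-\mathbf 1\{\varphi>\hat h^{\alpha/2}_{\varphi_j}\}$ without accounting for ties; you also make explicit the $\mathbb P_f$ reweighting, the $1/|\mathcal I_2|$ versus $1/(m+1)$ normalization, and the non-crossing caveat.
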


We consider a collection of feature transformations $\Omega(X_i)=\{\Omega_j(X_{ij})\}_{j=1}^p$ 
where each $\Omega_j(\cdot)$ maps the $j$-th modality to a feature representation.
In experiments, we employ a finite-dimensional representation obtained via dimension reduction, using either principal component analysis or singular value decomposition, to ensure computational tractability and stable estimation.

\begin{assumption}[Moment bounds, non-degeneracy, and bounded utility]
\label{assumption:moment}
There exist constants $C_1,C_2,C_3,C_\varphi,C_f,\rho,B>0$  such that, for all $f\in\mathcal H$ and all $\beta\in\mathbb R^p$ with $\|\beta\|_2=1$,
\begin{align*}
\mathbb E\big[\|\Omega(X_i)\|_2^2\big] &\le C_1 d, \\
\mathbb E\big[|f(X_i)|\,\|\Omega(X_i)\|_2^2\big] &\le C_2\,d\,\mathbb E|f(X_i)|, \\
\mathbb E\big[|\Omega(X_i)^\top\beta|^2\big] &\le C_3, \\
\mathbb E\big[|f(X_i)|(\varphi_j(x_i))^2\big] &\le C_\varphi\,\mathbb E|f(X_i)|, \\
\sqrt{\mathbb E|f(X_i)|^2} &\le C_f\,\mathbb E|f(X_i)|, \\
\mathbb E\big[|\Omega(X_i)^\top\beta|\big] &\ge \rho, \\
\mathbb E|\varphi_j(x_i)|^2 &< \infty, \ |\mathbb{E}[\mathrm{val}(X,Y;S)]|\le B.
\end{align*}
\end{assumption}

Assumption~\ref{assumption:moment} imposes mild moment and
non-degeneracy conditions on the feature transformations
and utility function. Such conditions are standard in RKHS
learning and high-dimensional statistics to guarantee concentration and uniform convergence of empirical processes \citep{vershynin2018high,wainwright1945high,gibbs2025conformal}.

To show near-optimality of model selection for classification, we require another assumption.
\begin{assumption}[Approximate independence]
\label{assumption:independence}
There exist constants $\epsilon_1,\epsilon_2 \ge 0$ such that for all
disjoint $S, S' \subseteq [p]$,
$I(X_{i,S};X_{i,S'} \mid Y) \le \epsilon_1$ and 
$I(X_{i,S};X_{i,S'}) \le \epsilon_2$.
\end{assumption}

Assumption \ref{assumption:independence} has been used in the multimodal learning literature  \citep{he2024efficient} and is  weaker than
the strict conditional
independence assumption
\citep{white2012convex, wu2018multimodal, sun2020tcgm}.

If we select the modalities with $q$-highest
$\hat h_{\phi_j}^{1-\alpha/2q} > 0$, we get the following guarantees
with regards to the optimal set $S^*_q = \arg\max_{S:|S|\leq q} \mathbb{E}[\mathrm{val}(X,Y;S)]$

\begin{theorem}[Near-Optimal Utility of Selected Set for Classification]
\label{thm:classification}
Let $S^*_q$ be the optimal set of size less than or equal to $q$.
Let $\hat S^\alpha_q$ be the set of size less than or equal to $q$ selected
by our algorithm for target 
failure probability $\alpha \in (0,1)$, and let
$\ell(y, \hat y) = -\sum_{y^*\in \mathcal{Y}}\mathbf1\{y=y^*\}\log \hat y$.
Under Assumptions~ \ref{assumption:moment}-\ref{assumption:independence},
the following inequality holds with probability
at least $1 - \alpha - 2\lambda_1 \epsilon_1-2\lambda_2 \epsilon_2$:
\begin{align*}
    &\mathbb{E}[\mathrm{val}(X,Y;\hat S^\alpha_q)]\geq 
\mathbb{E}[\mathrm{val}(X,Y;S^*_q)]\\&\quad\quad -\sum_{j\in S^*_q} 
2\left(\kappa\sqrt{\frac B \lambda_1} + C_\Omega(x)\sqrt\frac{B}{\lambda_2}\right) - 4q \delta.
\end{align*}    
\end{theorem}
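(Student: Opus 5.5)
The plan is to follow the greedy/submodularity style argument of \citet{he2024efficient}, but to use the conformal Shapley intervals in place of exact Shapley values. There are three steps.

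\textbf{Step 1: approximate additivity of the utility.} Under cross-entropy loss, $\mathbb E[\mathrm{val}(X,Y;S)] = I(X_S;Y)$ (Section~\ref{sec:value}). For disjoint $S,S'$ the chain rule gives
\[
I(X_{S\cup S'};Y)=I(X_S;Y)+I(X_{S'};Y)+I(X_S;X_{S'}\mid Y)-I(X_S;X_{S'}).
\]
Assumption~\ref{assumption:independence} bounds the interaction term by $\epsilon_1+\epsilon_2$. Applying this to each coalition in \eqref{eqn:definsshapley}, the expected Shapley value $\mathbb E[\varphi_j(X)]$ equals $I(X_j;Y)$ up to an $O(\epsilon_1+\epsilon_2)$ term. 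Iterating over the elements of a set $S$, the utility of $S$ is $\sum_{j\in S}\mathbb E[\varphi_j]$ up to a per-element slack. This slack is what I expect to absorb into $\delta$, producing the $4q\delta$ term: at most $q$ elements from each of $\hat S$ and $S^*$, each counted twice.

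\textbf{Step 2: a high-probability event.} On this event each conformal interval brackets the relevant Shapley quantity. Lemma~\ref{lem:conditional} with $\tau=\alpha/(2q)$ and $1-\alpha/(2q)$ gives coverage per modality up to the derivative terms. A union bound over the at most $2q$ modalities in $\hat S\cup S^*$ gives total miscoverage $\alpha$. I would bound the residual derivative terms $\frac{d}{d\epsilon}\mathcal R(\hat h+\epsilon f)|_{\epsilon=0}$ using the first-order optimality of \eqref{eqn:gibbsmin}: they equal $2\lambda_1\langle \hat h_K,f_K\rangle+2\lambda_2\hat\beta^\top\beta_f$. With the choice of $f$ tied to the independence structure, these are the sources of the $2\lambda_1\epsilon_1+2\lambda_2\epsilon_2$ loss in probability.

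\textbf{Step 3: width of the intervals.} I would compare the value of $\eqref{eqn:gibbsmin}$ at its minimizer with its value at $h=0$. Under Assumption~\ref{assumption:moment}, the pinball risk at zero is $O(B)$, so $\lambda_1\|\hat h_K\|_K^2\lesssim B$ and $\lambda_2\|\hat\beta\|^2\lesssim B$. Reproducing-property and Cauchy--Schwarz bounds then give
\[
|\hat h(x)| \le \kappa\sqrt{B/\lambda_1}+C_\Omega(x)\sqrt{B/\lambda_2},
\]
with $\kappa=\sup_x\sqrt{K(x,x)}$ and $C_\Omega(x)=\|\Omega(x)\|_2$. Hence on the event of Step 2, each upper score $\hat h^{1-\alpha/(2q)}_{\varphi_j}$ lies within $2(\kappa\sqrt{B/\lambda_1}+C_\Omega\sqrt{B/\lambda_2})$ of the target $\mathbb E[\varphi_j]$.

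\textbf{Conclusion.} Selection by top upper scores gives $\sum_{j\in\hat S}\hat h_j\ge\sum_{j\in S^*}\hat h_j$, with dropped nonpositive scores only helping. Translating back through Step 3 costs the stated width sum over $j\in S^*_q$, and through Step 1 costs $4q\delta$, which yields the inequality.

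\textbf{Main obstacle.} I expect Step 2 to be hardest: converting the conditional-coverage Lemma~\ref{lem:conditional}, which concerns the random $\varphi_j^{n+1}$, into a statement about the population utility ordering. I would first try to handle it by taking expectations over the test point and using the bracketing only in expectation.
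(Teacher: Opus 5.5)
Your outline is essentially the paper's proof, step for step:
\begin{itemize}
\item the paper also takes a union bound at level $\alpha/(2q)$ over $\hat S^\alpha_q$ and $S^*_q$, picking up the regularization derivative terms $2\lambda_1\langle\hat h_K,f_K\rangle_K+2\lambda_2\langle\hat\beta,\beta\rangle$;
\item it uses $\mathbb E[\mathrm{val}(X,Y;S)]-2q\delta\le\sum_{j\in S}\varphi_j\le\mathbb E[\mathrm{val}(X,Y;S)]+2q\delta$ for $|S|\le q$, imported from Propositions 3.2 and 5.1--5.3 of \citet{he2024efficient} rather than derived from the chain rule as you do;
\item it bounds widths by Lemma~\ref{rkhs_width_bound} via comparison with $h=0$;
\item it chains through $\sum_{j\in\hat S^\alpha_q}\hat h^U_j\ge\sum_{j\in S^*_q}\hat h^U_j$.
\end{itemize}
One clarification: the $\epsilon_1,\epsilon_2$ in the probability are not the mutual-information constants of Assumption~\ref{assumption:independence}. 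The paper's proof simply redefines them as the normalized inner products $\mathbb E[\langle\hat h_{j,K},f_K\rangle_K]/\mathbb E_P[f(X)]$ and $\mathbb E[\langle\hat\beta,\beta\rangle]/\mathbb E_P[f(X)]$, summed over $j\in\hat S^\alpha_q$ and over $j\in S^*_q$. So there is no choice of $f$ ``tied to the independence structure'' to look for. Independence enters only through $\delta$.

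\textbf{The flagged obstacle is a genuine gap.} The step you flag as the main obstacle is not resolved by your proposal, and the paper does not close it either. From the event that the \emph{realized} values $\varphi_j(x_{n+1})$ lie on the correct side of the endpoints, the paper passes directly to $\sum_{j\in\hat S^\alpha_q}\mathbb E[\varphi_j(x_{n+1})]\ge\sum_{j\in\hat S^\alpha_q}\hat h^L_j$ and $\sum_{j\in S^*_q}\mathbb E[\varphi_j(x_{n+1})]\le\sum_{j\in S^*_q}\hat h^U_j$. Lemma~\ref{lem:conditional} is a weighted-coverage statement about the random $\varphi_j(x_{n+1})$. The endpoints estimate conditional quantiles at $x_{n+1}$, and these need not bracket the population Shapley value $\mathbb E[\varphi_j(X)\mid\hat\mu]$ that enters the additivity bound.

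Your proposed repair, bracketing in expectation over the test point, does not work either. First, coverage with probability $1-\alpha/(2q)$ does not give $\mathbb E[\varphi_j]\ge\mathbb E[\hat h^L_j]$ without tail control. Second, an in-expectation statement cannot yield a conclusion that holds with probability $1-\alpha-\cdots$ for the input-dependent set $\hat S^\alpha_q(x_{n+1})$, with widths evaluated at $x_{n+1}$. You need an extra ingredient: either a deterministic bracketing of the population Shapley values by the endpoints, or a utility redefined conditionally on $x_{n+1}$.

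\textbf{The union bound is a second gap}, also inherited from the paper. It is fine over $S^*_q$, which is fixed, but not over $\hat S^\alpha_q$, which is selected using the same scores $\hat h^U_j(x_{n+1})$. Hence $\mathbb P\bigl(\bigcup_{j\in\hat S^\alpha_q}\{\varphi_j(x_{n+1})<\hat h^L_j\}\bigr)$ is only controlled by summing over all $p$ modalities. That gives $p\alpha/(2q)$ rather than $\alpha/2$. You would need level $\alpha/(2p)$ on the lower side, or an argument that accounts for the selection.

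\textbf{Minor point on the width sum.} Your chain produces the width sum over $\hat S^\alpha_q$, not over $S^*_q$; the paper's proof also ends this way. The stated form follows only because the width bound is uniform in $j$ and, by monotonicity, one may take $|S^*_q|=\min(q,p)\ge|\hat S^\alpha_q|$.
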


To show near-optimality of model selection for regression, we require the following assumption:
\begin{assumption}[Linear-Gaussian structure]
\label{assumption:linearity}
The transformed features $\Omega(X)$ satisfy
\[
\mathbb E[Y_i \mid \Omega(X_i)] = \alpha + \sum_{j=1}^p \beta_j \Omega_j(X_{ij}),
\
\Omega(X) \sim \mathcal N(\mu, \Sigma).
\]
\end{assumption}
This assumption formalizes a linear signal structure after feature transformation and is standard in theoretical analyses of explainable  machine learning and AI methods \citep{lundberg2017unified, wilming2022scrutinizing}.

\begin{theorem}[Near-Optimal Utility of Selected Set for Regression]
\label{thm:regression}
Let $S^*_q$ be the optimal set of size less than or equal to $q$.
Let $\hat S^\alpha_q$ be the set of size less than or equal to $q$ selected
by our algorithm for target 
failure probability $\alpha \in (0,1)$, and
let $\ell(y, \hat y) = (y-\hat y)^2$
Under Assumptions~ \ref{assumption:moment} and \ref{assumption:linearity},
the following inequality holds with probability
at least $1 - \alpha - 2\lambda_1 \epsilon_1-2\lambda_2 \epsilon_2$:
\begin{align*}\mathbb{E}[\mathrm{val}(X,Y;\hat S^\alpha_q)] &\geq \mathbb{E}[\mathrm{val}(X,Y;S^*_q)] \\&- \sum_{j\in S^*_q} 2\left(\kappa\sqrt{\frac B \lambda_1} + C_\Omega(x)\sqrt\frac{B}{\lambda_2}\right)
\end{align*}    
\end{theorem}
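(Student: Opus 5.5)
The plan is to follow the same template as the classification result, Theorem~\ref{thm:classification}, with Assumption~\ref{assumption:independence} replaced by the linear-Gaussian structure of Assumption~\ref{assumption:linearity}. The steps are: (i) show that under Assumption~\ref{assumption:linearity} the squared-loss utility is essentially additive over modalities; (ii) control the gap between the estimated upper quantiles $\hat h^{1-\alpha/(2q)}_{\varphi_j}$ and the population Shapley contributions uniformly over $j$; (iii) run the standard greedy/top-$q$ exchange argument.

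\textbf{Step 1 (structure of the utility).} By \eqref{eqn:utilityofval}, with squared loss the utility of $S$ is $v(S):=\mathrm{Var}(\mathbb E[Y\mid \Omega_S])$. Under Assumption~\ref{assumption:linearity}, joint Gaussianity of $\Omega(X)$ gives
\[
\mathbb E[Y\mid\Omega_S]=\mathrm{const}+\beta_S^\top\Omega_S+\beta_{S^c}^\top\Sigma_{S^cS}\Sigma_{SS}^{-1}\Omega_S .
\]
Hence $v(S)$ is an explicit quadratic form in $\beta$. The Shapley value of $v$ decomposes $v([p])=\beta^\top\Sigma\beta$ into the per-modality contributions $\bar\varphi_j=\mathbb E[\varphi_j(X)]$, by efficiency and linearity of \eqref{eqn:definsshapley} in val.

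The key claim is a modular bound
\[
v(S)\ \ge\ \sum_{j\in S}\bar\varphi_j - (\text{interaction remainder}), \qquad v(S)\le \sum_{j\in S}\bar\varphi_j + (\text{remainder}).
\]
In the linear-Gaussian case I expect this claim to hold either exactly, or with a remainder absorbed into the same constants. This is the analogue of how Assumption~\ref{assumption:independence} yields the $4q\delta$ term in the classification case. In fact, for the quadratic utility the contribution of $j$ given $S$ is $(\text{partial regression coefficient})^2\times$ residual variance. I will argue that the Shapley average of these contributions sandwiches the marginal gains, which is why no additive $\delta$ term appears in the statement.

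\textbf{Step 2 (interval validity and estimation error).} Apply Lemma~\ref{lem:conditional} with quantile levels $\alpha/(2q)$ and $1-\alpha/(2q)$, and union bound over the at most $2q$ relevant modalities, namely those in $S^*_q\cup\hat S^\alpha_q$. This gives, with probability at least $1-\alpha$ (plus the derivative-of-risk slack, which I will bound by the same $\lambda$ terms appearing in the stated probability), that $\bar\varphi_j\le \hat h^{1-\alpha/(2q)}_{\varphi_j}(x_{n+1})$ for all relevant $j$.

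Conversely, I bound how far $\hat h$ can overshoot. The first-order optimality condition of \eqref{eqn:gibbsmin}, compared with $h=0$, gives $\lambda_1\|\hat h_K\|_K^2+\lambda_2\|\hat\beta\|_2^2\lesssim B$. The bound $B$ comes from Assumption~\ref{assumption:moment}, since the pinball loss at $0$ is bounded by the utility scale. Then $|\hat h_K(x)|\le\kappa\|\hat h_K\|_K\le\kappa\sqrt{B/\lambda_1}$ and $|\Omega(x)^\top\hat\beta|\le C_\Omega(x)\sqrt{B/\lambda_2}$. The same bound applied to both quantile estimates gives an interval half-width of at most $\kappa\sqrt{B/\lambda_1}+C_\Omega(x)\sqrt{B/\lambda_2}$ around $\bar\varphi_j$, which yields the factor $2$ in the theorem.

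\textbf{Step 3 (exchange argument).} Since $\hat S^\alpha_q$ maximizes $\sum_{j\in S}\hat h^{1-\alpha/(2q)}_{\varphi_j}$ over $|S|\le q$, restricted to positive scores, chain the inequalities
\begin{align*}
v(\hat S)&\ge \sum_{j\in\hat S}\bar\varphi_j
\ge \sum_{j\in\hat S}\hat h_j - \sum_{j\in\hat S}\text{width}_j \\
&\ge \sum_{j\in S^*_q}\hat h_j - \sum_{j\in\hat S}\text{width}_j
\ge \sum_{j\in S^*_q}\bar\varphi_j - 2\sum_{j\in S^*_q}\text{width}_j
\ge v(S^*_q)-\sum_{j\in S^*_q}2(\cdots).
\end{align*}
Dropping modalities with nonpositive scores only helps, by monotonicity of $v$. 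The index-set bookkeeping here (widths summed over $\hat S$ rather than $S^*_q$) is handled by the uniform width bound, since $|\hat S|\le|S^*_q|$ may be arranged WLOG.

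The main obstacle is Step 1: establishing the two-sided modular sandwich for $\mathrm{Var}(\mathbb E[Y\mid\Omega_S])$ when $\Sigma$ is non-diagonal. The first thing I would try is to write $v(S)=\beta^\top\Sigma_{\cdot S}\Sigma_{SS}^{-1}\Sigma_{S\cdot}\beta$ and exploit the Gaussian/linear projection geometry to compare it with the Shapley shares. If exact modularity fails, I would fall back on absorbing the discrepancy into the regularization terms via the bound $|\mathbb E\,\mathrm{val}|\le B$.
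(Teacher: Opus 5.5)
\textbf{There is a genuine gap at your Step~1.} Your three-step plan is the paper's own. Its proof of Theorem~\ref{thm:regression} is the classification argument unchanged: a union bound via Lemma~\ref{lem:conditional}, the width bound from comparing the objective with $h=0$, and the top-$q$ exchange. The $\delta$-terms disappear because Propositions~3.5 and~5.5 of \citet{he2024efficient} are cited to supply monotonicity and \emph{exact} additivity, $\mathbb E[\mathrm{val}(X,Y;S)]=\sum_{j\in S}\bar\varphi_j$ for every $S$. That identity is the whole content of your Step~1, and you do not supply it; you only say you expect it and will argue it.

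The route you sketch cannot succeed under the stated assumptions. Assumption~\ref{assumption:linearity} allows an arbitrary $\Sigma$, and once modality blocks are correlated the sandwich fails in both directions. Take two scalar standard normal modalities with correlation $r\in(-1,1)$ and $Y=\Omega_1+\Omega_2+\varepsilon$. Then $v(\{1\})=\mathrm{Var}((1+r)\Omega_1)=(1+r)^2$ and $v(\{1,2\})=2(1+r)$. By symmetry and efficiency, $\bar\varphi_1=\bar\varphi_2=1+r$.
\begin{itemize}
\item For $r>0$ (redundancy), $v(\{1\})>\bar\varphi_1$. This breaks $\sum_{j\in S^*_q}\bar\varphi_j\ge v(S^*_q)$, which you use in the last link of your chain.
\item For $r<0$ (suppression), $v(\{1\})<\bar\varphi_1$. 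This breaks $v(\hat S^\alpha_q)\ge\sum_{j\in\hat S^\alpha_q}\bar\varphi_j$, which you use in the first link.
\end{itemize}
Your fallback does not rescue this. The discrepancy $(1+r)|r|$ is of the same order as the utility itself and does not involve $\lambda_1,\lambda_2$. The stated bound contains only the interval-width terms and no additive slack, so there is nothing to absorb it into.

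To close the gap you need an extra structural hypothesis that makes the utility modular. One example is mutually uncorrelated modality blocks, which under Gaussianity are independent. Then $\mathbb E[Y\mid\Omega_S]$ is a constant plus $\sum_{j\in S}\beta_j^\top\Omega_j$, so $v(S)=\sum_{j\in S}\beta_j^\top\Sigma_{jj}\beta_j$ and $\bar\varphi_j=\beta_j^\top\Sigma_{jj}\beta_j$, where $\Sigma_{jj}$ is the $j$-th diagonal block of $\Sigma$. With that, your Steps~2 and~3 go through as in the paper. The paper itself supplies this step only through the citation; the example shows it cannot come from the stated assumptions, none of which restricts cross-modality correlation.

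Your Step~2 also makes two moves that the paper's argument makes as well, so they are not where you depart from it:
\begin{itemize}
\item It reads Lemma~\ref{lem:conditional} as coverage of the deterministic mean $\bar\varphi_j$, whereas the lemma covers the random test-point value $\varphi_j(x_{n+1})$.
\item It union-bounds over the data-dependent set $\hat S^\alpha_q$ as if that set were fixed.
\end{itemize}
Neither move is actually licensed by Lemma~\ref{lem:conditional}.
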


Thus, our method offers a conditional guarantee on near-optimality
of the selected set $\hat S^\alpha_q$ in both classification and
regression settings.
The proofs of our theory is shown in
the Supplementary Material.

\section{Empirical Results}
\label{sec:empirical}
We evaluate conformal Shapley intervals on various datasets to assess two key questions:
(i) whether uncertainty-aware modality selection achieves strong predictive performance with fewer modalities; (ii)  whether the proposed intervals provide meaningful, input-dependent uncertainty quantification for modality importance.
The code for these experiments is available at \url{https://anonymous.4open.science/r/shap_mml-B4D1/}.

\subsection{Synthetic Regression}
\label{subsec:reg}
We first evaluate our method on the synthetic regression benchmark of \citet{he2024efficient}, which allows controlled assessment of modality selection under known dependence structures. The input consists of $p=10$ modalities, each of dimension $d=3$.
To generate correlated multimodal features, we construct two covariance components. Let $B \in \mathbb{R}^{pd \times pd}$ be obtained by sampling entries uniformly from $[-1,1]$, multiplying the matrix by its transpose, and normalizing each row by the sum of absolute values. Let $A \in \mathbb{R}^{pd \times pd}$ be a block-diagonal matrix with $p$ diagonal blocks of size $d \times d$, each constructed using the same procedure as $B$, and zeros elsewhere. We then define the covariance matrix
$\Sigma = (1-\epsilon)A + \epsilon B$,
where $\epsilon \in [0,1]$ controls the degree of cross-modality dependence. We sample $n=1000$ observations $X \sim \mathcal{N}(0,\Sigma)$ and generate responses according to
$Y = X^\top \beta + \alpha + \varepsilon$,
where $\beta \in \mathbb{R}^{pd}$ and $\alpha \in \mathbb{R}$ are sampled uniformly from $[-1,1]$, and $\varepsilon \sim \mathcal{N}(0,1)$. We use 250 samples for training, 250 for calibration, and the remaining $500$ samples for testing.
We apply Algorithm~\ref{alg:conshapley} with ordinary least squares as the learner $\mathcal{A}$ and squared error loss. Models are refit for each modality subset. For the RKHS-based quantile estimation, we select regularization parameters $\lambda_1$ and $\lambda_2$ using five-fold cross-validation.

\begin{figure}[t]
    \centering
    \includegraphics[width=0.75\linewidth]{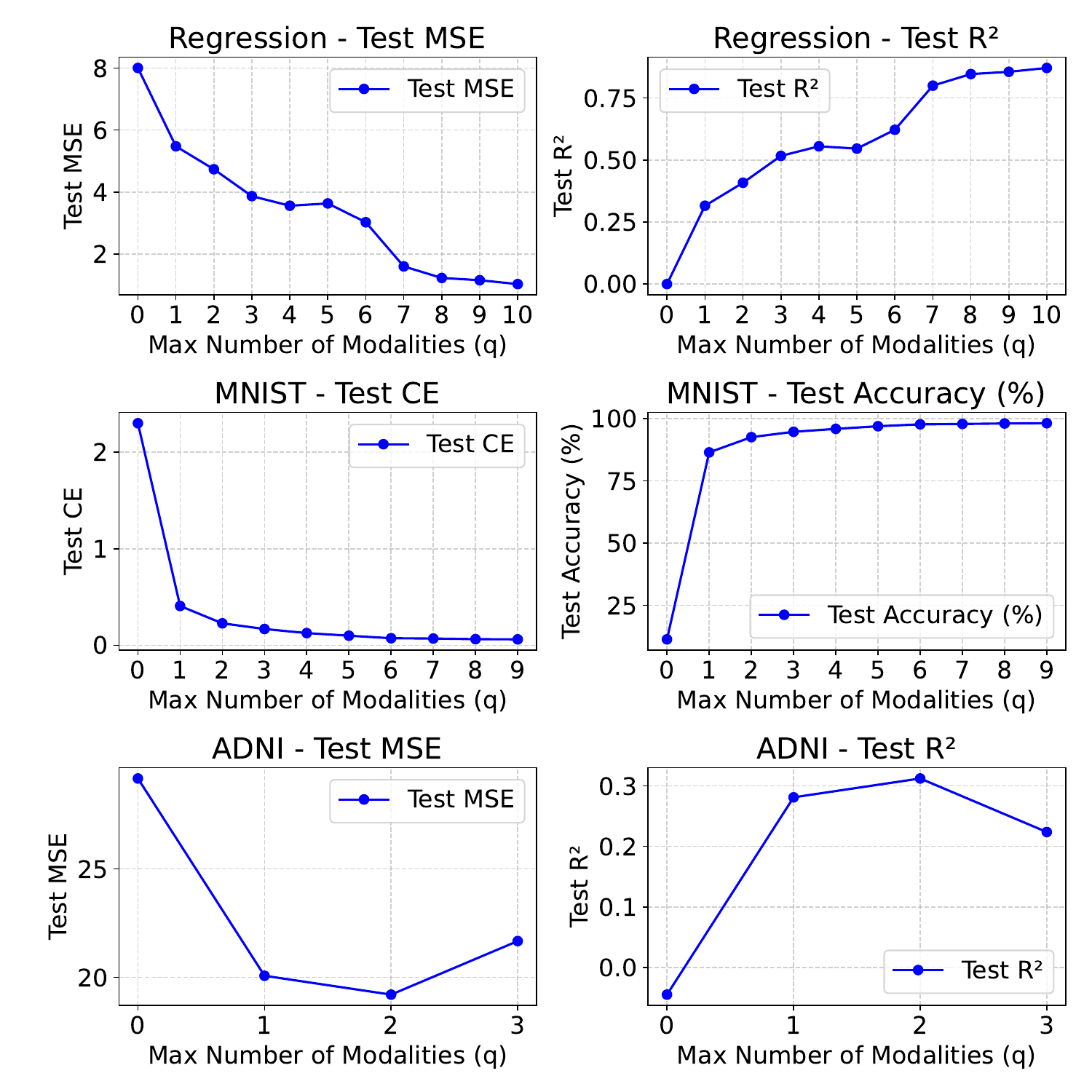}
    \caption{Model selection paths for synthetic regression data set, MNIST, and ADNI. Test MSE and $R^2$ are shown for regression and ADNI data, whereas test cross-entropy (CE) and accuracy (\%) shown for MNIST.}
    \label{fig:facet_model_selection_paths}
\end{figure}

\paragraph{Results.}
The first row of Figure~\ref{fig:facet_model_selection_paths} shows the resulting modality selection paths as a function of the maximum allowed subset size $q$. As $q$ increases, predictive performance, measured by test MSE and $R^2$, improves nearly monotonically and rapidly approaches the optimal model using all modalities. Importantly, near-optimal performance is achieved using substantially fewer modalities, illustrating the effectiveness of uncertainty-aware selection. These results empirically support the conditional near-optimality guarantees established in Section~\ref{sec:theory}.
While we present selection paths that adapt to individual test points, the conformal Shapley framework also admits global selection strategies. For example, one may integrate the upper Shapley quantile function $\hat h^{1-\alpha/(2q)}_{\varphi_j}(x)$ over a target distribution on $\mathcal{X}$ and select a single modality subset that maximizes this integrated score.

% \begin{figure}
%     \centering
%     \includegraphics[width=1\linewidth]{manuscript/figures/modality_attribution_boxplot.pdf}
%     \caption{Distributions of Shapley values for each modality in
%     \citet{he2024efficient}'s regression setting.}
%     \label{fig:boxplot}
% \end{figure}

% \begin{figure}s
%     \centering
%     \includegraphics[width=1\linewidth]{manuscript/figures/shapley_value_intervals.pdf}
%     \caption{80\% split conformal intervals for Shapley values for
%      each modality in \citet{he2024efficient}'s regression setting.}
%     \label{fig:intervals}
% \end{figure}

\subsection{MNIST Classification}
\label{subsec:class}

We next evaluate our method on the MNIST image classification task, using the patch-based multimodal construction. The MNIST dataset consists of 50{,}000 training images and 10{,}000 test images of handwritten digits from 0 to 9, where each image is grayscale with resolution $28 \times 28$ \citep{lecun2010mnist}. In our experiments, each image is partitioned into $p=9$ spatial patches, so that each patch corresponds to a rectangular region of side length 9 or 10 pixels and is treated as a separate modality.
For the learner $\mathcal{A}$, we use a lightweight convolutional neural network that takes a $28 \times 28 \times 1$ grayscale input and consists of a single convolutional layer with 32 filters of size $3 \times 3$ and ReLU activation, followed by $2 \times 2$ max pooling. The resulting feature maps are flattened and passed to a fully connected hidden layer with 128 ReLU-activated units. The output layer is a 10-way softmax that produces class probabilities. Models are trained using the Adam  with learning rate $10^{-3}$ and the sparse categorical cross-entropy loss. To control computational cost when refitting models for different modality subsets, each network is trained for a single epoch with mini-batches of size 16 and a 0.1 validation split.

\begin{figure}[t]
    \centering
    \includegraphics[width=0.75\linewidth]{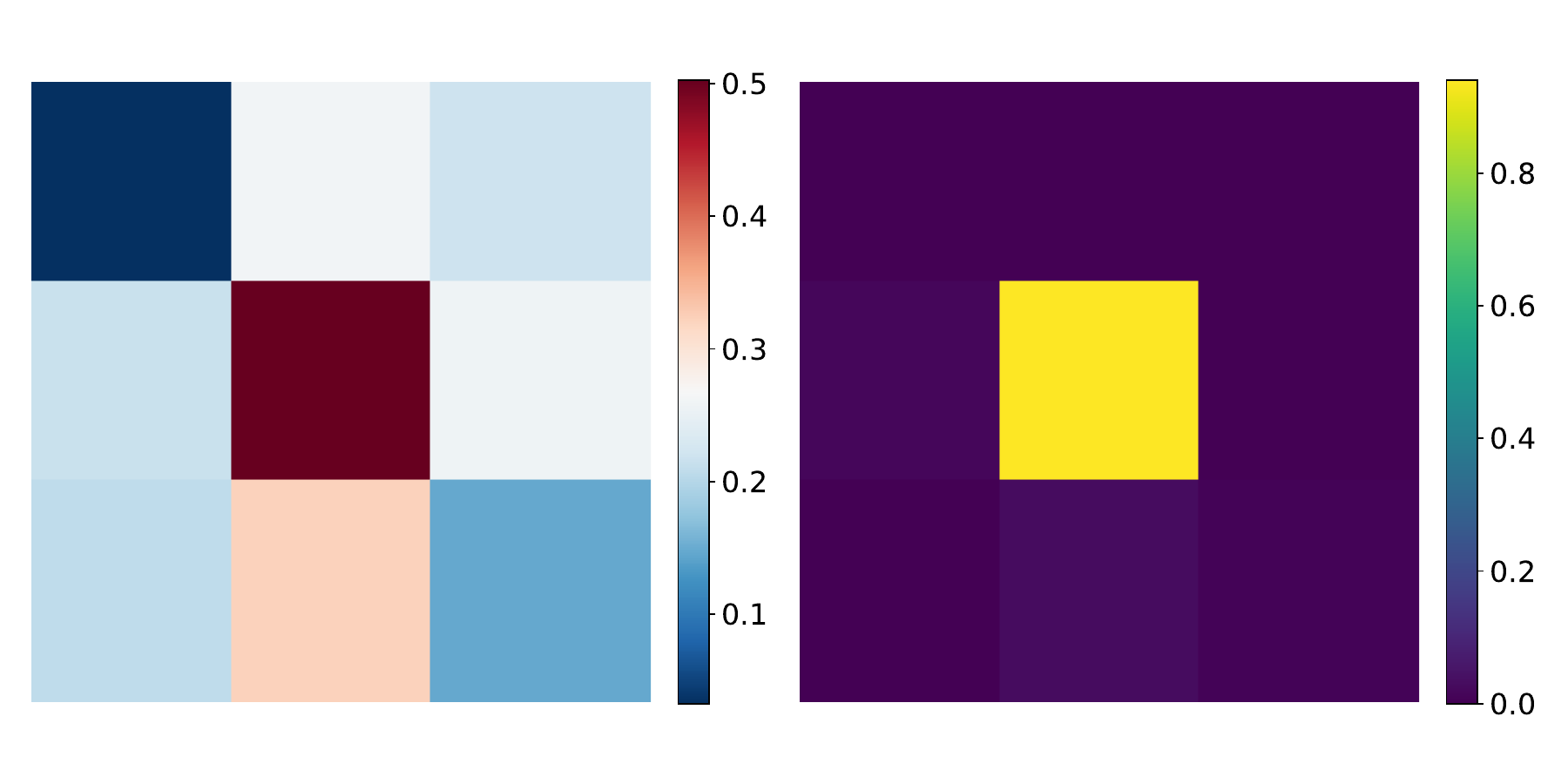}
    \caption{Left panel: Mean Shapley values $\hat\varphi_j$ for each image patch $j$, averaged over the calibration set.  Right panel: Selection frequency of each patch when at most one patch is selected ($q=1$).}
    \label{fig:mnist}
\end{figure}

\paragraph{Results.}
The left panel of Figure~\ref{fig:mnist} shows the mean Shapley values $\hat\varphi_j$ for each patch, averaged over the calibration set. As expected, the central patch exhibits the highest average importance, while corner patches have lower marginal contributions, showing the spatial concentration of digit information. However, these marginal summaries do not capture input-dependent variation in modality relevance.

We apply Algorithm~\ref{alg:conshapley} with log loss to construct conformal Shapley intervals and perform uncertainty-aware modality selection. In the RKHS quantile estimation step, we apply principal component analysis for dimension reduction and set $\lambda_1=\lambda_2=0.01$. The second row of Figure~\ref{fig:facet_model_selection_paths} shows test accuracy as a function of the maximum number of selected patches $q$. Predictive performance increases monotonically with $q$ and remains strong even when $q=1$, indicating that near-optimal classification accuracy can be achieved using a single, adaptively selected patch.

The right panel of Figure~\ref{fig:mnist} shows the selection frequency of each patch when $q=1$. As expected, the central patch is selected most frequently. However, unlike marginal or point-estimate-based selection rules, our conditional method also selects non-central patches for a non-negligible fraction of test images. This result shows that conformal Shapley values capture input-dependent modality relevance, allowing different patches to be selected for different inputs.

\begin{table*}[t]
\centering
\caption{Global modality-importance $P$-values by subgroup.}
\label{tab:global-p-values}
\footnotesize
\resizebox{\textwidth}{!}{%
\begin{tabular}{lccccc ccc cccc}
\toprule
& \multicolumn{2}{c}{(a) Sex} 
& \multicolumn{3}{c}{(b) APOE} 
& \multicolumn{3}{c}{(c) Diagnosis} 
& \multicolumn{4}{c}{(d) Education} \\
\cmidrule(lr){2-3} \cmidrule(lr){4-6} \cmidrule(lr){7-9} \cmidrule(lr){10-13}
Modality
& Female & Male
& APOE2 & APOE3 & APOE4
& AD & MCI & CU
& Postgrad & College & SomeCollege & $\le$HS \\
\midrule
A$\beta$
& 0.63 & 0.30
& 1.00 & 8.9e{-15} & 1.00
& 0 & 1.00 & 1.8e{-7} 
& 0.035 & 1.5e{-7} & 1.00 & 1.00 \\

Tau
& 0.014 & 1.00
& 0.94 & 0.44 & 0.41
& 1.00 & 1.00 & 0.008 
& 9.9e{-5} & 1.00 & 2.6e{-5} & 1.00 \\

MRI
& 2.1e{-4} & 1.00
& 9.2e{-5} & 0 & 1.00
& 5.2e{-10} & 1.00 & 0.035
& 0.64 & 0.15 & 3.0e{-8} & 1.00 \\
\bottomrule
\end{tabular}}
\end{table*}

\subsection{Alzheimer’s Disease Data (ADNI)}
\label{subsec:adni}

We finally apply our method to a real-world biomedical dataset from the Alzheimer’s Disease Neuroimaging Initiative \citep[ADNI,][]{jack2008alzheimer}, where modality selection is both scientifically meaningful and practically important. The response variable is the Preclinical Alzheimer’s Cognitive Composite \citep{donohue2014preclinical}, a continuous score measuring cognition and memory. %that ranges from $-35.07$ to $6.58$ in the dataset. 
We consider three neuroimaging modalities that capture different stages of Alzheimer’s disease progression \citep{jack2010hypothetical}. The first modality is amyloid-$\beta$, a protein fragment that accumulates early in the disease process, often decades before clinical symptoms. The second modality is tau, a protein associated with neurofibrillary tangles that typically appear later. The third modality is cortical thickness derived from MRI scans, which measures neurodegeneration through structural brain atrophy. understanding which modalities are most informative for predicting cognitive and memory decline is critical for translating predictive insights into actionable strategies, such as prioritizing biological pathways and targets for therapeutic development \citep{leng2021neuroinflammation}.

In addition to imaging data, we include five demographic and genetic covariates: sex, education level, APOE genotype, diagnosis status, and age. These covariates are used to condition the conformal Shapley intervals. The dataset contains 557 subjects, where we use 350 samples for training and calibration (split evenly) and reserve the remainder for testing. For the learner $\mathcal A$, we use XGBoost with 200 estimators, learning rate 0.1, and maximum tree depth 3. Regularization parameters for the RKHS quantile estimator are selected via five-fold cross-validation for each choice of the maximum number of modalities $q$.

%We also include five other covariates: education ($\leq$ High School, Some College, College Graduate, Post Graduate), and age. 
%In the literature, we have seen that lower Education is associated with lower risk of Alzheimer’s but that is not to say there is necessarily a causal  effect \citep{walters2025educational}.  In this case, the conformal Shapley intervals are conditioned on the five covariates listed, whereas the predictive model does not include this feature information. The original sample size is 700, but after removing rows with missing values, we are left with 557.

\paragraph{Results on uncertainty-aware modality selection.}
The third row of Figure~\ref{fig:facet_model_selection_paths} reports test MSE and $R^2$ as functions of the maximum number of selected modalities $q$. Performance improves rapidly for  for $q \leq 2$. Performance may plateau or slightly decrease for larger $q$, reflecting the inclusion of less informative or noisy modalities when no explicit penalty on model size is imposed. Because modality selection is conditional on patient covariates, the selected modality subset may vary across individuals. We also observe that using higher conditional quantiles yields smoother and more stable selection paths, whereas lower quantiles introduce greater variability.

\paragraph{Results on modality importance by subgroups.}
Beyond individual-level selection, our framework enables global hypothesis testing for modality importance conditional on demographic and genetic covariates, where the $p$-values are shown in Table~\ref{tab:global-p-values}.

Table~\ref{tab:global-p-values}(a) shows that, conditional on sex, tau and MRI are significant for female patients, while amyloid-$\beta$ is not. In contrast, for male patients, amyloid-$\beta$ is closest to being significant, whereas tau and MRI have $p$-values of 1. These patterns align with evidence that Alzheimer’s disease progression differ by sex \citep{mei2025unraveling,arenaza2024sex, filon2016gender, moutinho2025women}.

Table~\ref{tab:global-p-values}(b) shows that modality importance, when conditioned on APOE genotype, exhibits clinically interpretable patterns. For APOE2 carriers, who are at lower genetic risk, early biomarkers such as amyloid-$\beta$ and tau are not significant, whereas MRI-based measures are informative. This is consistent with evidence that APOE2 confers relative resilience to amyloid and tau pathology, but not necessarily to neurodegeneration \citep{grothe2017multimodal}. For APOE3 carriers, amyloid-$\beta$ and MRI are highly significant, while tau is not, reflecting the limited prognostic value of APOE3 alone and its relative resilience to tau-related pathology \citep{liu2024updates}. In contrast, for APOE4 carriers, none of the imaging modalities are significant, suggesting that genetic risk already accounts for a substantial portion of the variability in early cognitive decline \citep{mares2025apoe}.

Table~\ref{tab:global-p-values}(c) reports results stratified by diagnosis status. For patients with Alzheimer’s disease (AD), amyloid-$\beta$ and MRI are significant, consistent with evidence that tau pathology and neurodegeneration are closely coupled in established AD \citep{bejanin2017tau}. For patients with mild cognitive impairment (MCI), none of the modalities are significant, suggesting that modality-level signals are less predictive in this intermediate stage, where cognitive impairment may arise from heterogeneous and non–Alzheimer’s-related causes. For cognitively unimpaired (CU) individuals, amyloid-$\beta$ and tau are significant, while MRI is not, aligning with the Alzheimer’s pathological model in which molecular pathology precedes neurodegeneration \citep{jack2013tracking}.

Table~\ref{tab:global-p-values}(d) summarizes modality importance conditional on education level. For individuals with postgraduate education, tau is significant, while amyloid-$\beta$ and MRI are not. Among those with a college degree, only amyloid-$\beta$ is significant. This pattern is consistent with the brain reserve hypothesis, which suggests that higher educational attainment may mitigate the observable impact of neurodegeneration on cognitive performance \citep{garibotto2008education}, and with prior evidence linking education to reduced amyloid plaque burden in certain populations \citep{honig2024association}. For individuals with some college education, tau and MRI are significant, while amyloid-$\beta$ is not, suggesting that downstream pathology and neurodegeneration play a larger role in this group. For individuals with a high school education or less, none of the modalities are significant, consistent with prior findings that cognitive outcomes exhibit substantial heterogeneity even after accounting for biomarker information, particularly across education levels \citep{roe2011cerebrospinal}.

These experiments show that conformal Shapley framework provides uncertainty-aware, conditional modality attribution that reveals clinically meaningful heterogeneity.

% We sought to achieve
% the best predictive MSE using one modality ($q = 1$). We tried
% regularization parameters $\lambda_1 \in \{1, 0.1, 0.01, 0.001\}$ and
% $\lambda_2 \in \{1, 0.1, 0.01, 0.001\}$. We find
% that for all our hyperparameter choices, our method
% results in lower MSE than simply predicting modality 1.

\section{Related Work}
\label{sec:related}
Our work connects three lines of research: Shapley-based modality attribution, modality selection in multimodal learning, and uncertainty quantification with finite-sample guarantees. To the best of our knowledge, it is the first framework to provide finite-sample, input-dependent uncertainty intervals for modality-level Shapley values and to leverage these intervals to obtain conditional near-optimality guarantees for modality selection. We briefly review the most closely related work in these areas below.

\paragraph{Shapley values and feature attribution.}
Shapley values were originally introduced in cooperative game theory \citep{shapley1951notes} and later adapted to machine learning for feature attribution by \citet{lundberg2017unified}. Subsequent work has focused on efficient computation and scalable approximations of Shapley values in high-dimensional settings \citep{sundararajan2020many, chen2023algorithms, watson2023explaining}. The majority of this literature considers feature-level explanations and treats Shapley values as fixed quantities derived from a learned model. In contrast, our paper explicitly accounts for uncertainty due to finite samples, model estimation, and data variability.

\paragraph{Multimodal learning and modality selection.}
Multimodal learning has been shown to offer both empirical and theoretical advantages over unimodal approaches \citep{yuhas2002integration, lu2023theory}. While incorporating multiple modalities often improves predictive performance, it can also increase computational burden and complicate interpretation. \citet{he2024efficient} proposed a Shapley-based framework for modality selection with near-optimality guarantees under a utility-based formulation. However, that approach treats modality importance as a point estimate and does not quantify uncertainty, relying on marginal Shapley values that do not adapt modality selection to individual inputs or observed covariates. In contrast, our work provides uncertainty-aware, input-dependent modality selection.

\paragraph{Uncertainty quantification for explanations.}
Recent work has begun to address uncertainty in Shapley-based explanations. Bayesian approaches characterize posterior variability of Shapley values under probabilistic modeling assumptions \citep{agussurja2022convergence}, while resampling-based methods estimate variability through the bootstrap \citep{huang2023increasing}.
\citet{watson2023explaining} introduced split conformal inference for feature-level Shapley values, providing finite-sample uncertainty guarantees with marginal coverage. More broadly, conformal inference offers distribution-free guarantees for uncertainty quantification \citep{lei2018distribution}, with classical split conformal methods ensuring marginal coverage and recent advances targeting conditional or locally adaptive guarantees \citep{romano2019conformalized, chernozhukov2021distributional}. In particular, \citet{gibbs2025conformal} proposed a general framework for conditional conformal inference using weighted exchangeability and RKHS-based quantile estimation. Building on this line of work, we apply conditional conformal inference to Shapley values themselves, rather than to prediction errors or outcomes, and provide uncertainty guarantees that are conditional on the observed covariates. This enables uncertainty-aware, input-dependent modality attribution and decision making.

\section{Conclusion}
\label{sec:concluding}

We introduced \emph{conformal Shapley intervals}, a framework for uncertainty-aware modality attribution and selection in multimodal learning. By combining Shapley values with conditional conformal inference, our method provides finite-sample, input-dependent uncertainty guarantees for modality importance and yields conditional near-optimality guarantees for predictive performance.
Our approach moves beyond point-estimate explanations by explicitly quantifying uncertainty in modality relevance and allowing modality selection to adapt to observed covariates. This unifies interpretability, uncertainty quantification, and model selection in a single framework for both regression and classification.
Experiments demonstrate that our method achieves competitive or improved predictive performance while relying on fewer modalities, and yields scientifically interpretable insights in applied settings.

There are several directions for future work. First, exact Shapley value computation can be computationally expensive when the number of modalities is large; incorporating approximation schemes may significantly improve scalability. Second, while our conditional guarantees adapt locally to observed covariates, extending the framework to target structured regions of the covariate space, as in region-adaptive conformal prediction, is a promising direction. Finally, it is of interest to extend our results to sequential or longitudinal settings, where modalities arrive over time, which is especially relevant in biomedical applications.

%%%%%%%%%%%%%%%%%%%%%%%%%%%%%%%%%%%%%%%%%%%%%%%%%

\bibliography{aiAgent}
% \bibliographystyle{apalike}

%%%%%%%%%%%%%%%%%%%%%%%%%%%%%%%%%%%%%%%%%%%%%%%%%%%%%%%%%%%%%%%%%%%%%%%%%%%%%%%
%%%%%%%%%%%%%%%%%%%%%%%%%%%%%%%%%%%%%%%%%%%%%%%%%%%%%%%%%%%%%%%%%%%%%%%%%%%%%%%
% APPENDIX
%%%%%%%%%%%%%%%%%%%%%%%%%%%%%%%%%%%%%%%%%%%%%%%%%%%%%%%%%%%%%%%%%%%%%%%%%%%%%%%
%%%%%%%%%%%%%%%%%%%%%%%%%%%%%%%%%%%%%%%%%%%%%%%%%%%%%%%%%%%%%%%%%%%%%%%%%%%%%%%
\newpage
\appendix
\onecolumn

\section{Results on Marginal Shapley for Alzheimer’s Disease Data (ADNI).}
The results in Section~\ref{subsec:adni} primarily leverage our \emph{conditional} conformal Shapley framework, which enables modality-level inference that adapts to observed feature values. As discussed in Subsection~\ref{sec:conditional}, the procedure involves computing pointwise Shapley values on the calibration set. Aggregating these pointwise estimates yields \emph{marginal} Shapley value distributions, which provide a global summary of modality importance averaged over demographic and genetic covariates.

We report such marginal analyses for the ADNI experiment. Figure~\ref{fig:shapley_density_by_modality} shows the empirical distributions of Shapley values by modality. All modalities exhibit right-skewed distributions, indicating heterogeneous contributions across individuals. Amyloid-$\beta$ has a distribution concentrated near zero, suggesting limited average contribution, while tau and MRI display broader distributions, with MRI exhibiting heavier tails. These marginal distributions reveal heterogeneity that is invisible to single point estimates, but they still average over important sources of variability.

While marginal Shapley analyses enable global inference on modality importance and can inform high-level modeling decisions, they are fundamentally limited: by averaging over covariates, they cannot capture context-dependent modality relevance or support individualized modality selection. In contrast, our conditional conformal approach results in Section~\ref{subsec:adni} explicitly models how modality importance varies with observed features, yielding input-dependent uncertainty intervals and enabling tailored modality selection. This distinction is critical in applications such as Alzheimer’s disease, where modality relevance depends strongly on patient characteristics.

Relying solely on marginal importance in this setting may lead to overly conservative or suboptimal use of costly or invasive imaging modalities, as it cannot distinguish patients for whom a modality is informative from those for whom it is not. By contrast, the conditional framework supports patient-specific decisions, as illustrated by the conditional $p$-values in Table~\ref{tab:global-p-values}. This sharp separation between marginal summary and conditional inference highlights the practical value of our conditional conformal Shapley framework.

\begin{figure}[ht]
\centering
\includegraphics[width=0.5\linewidth]{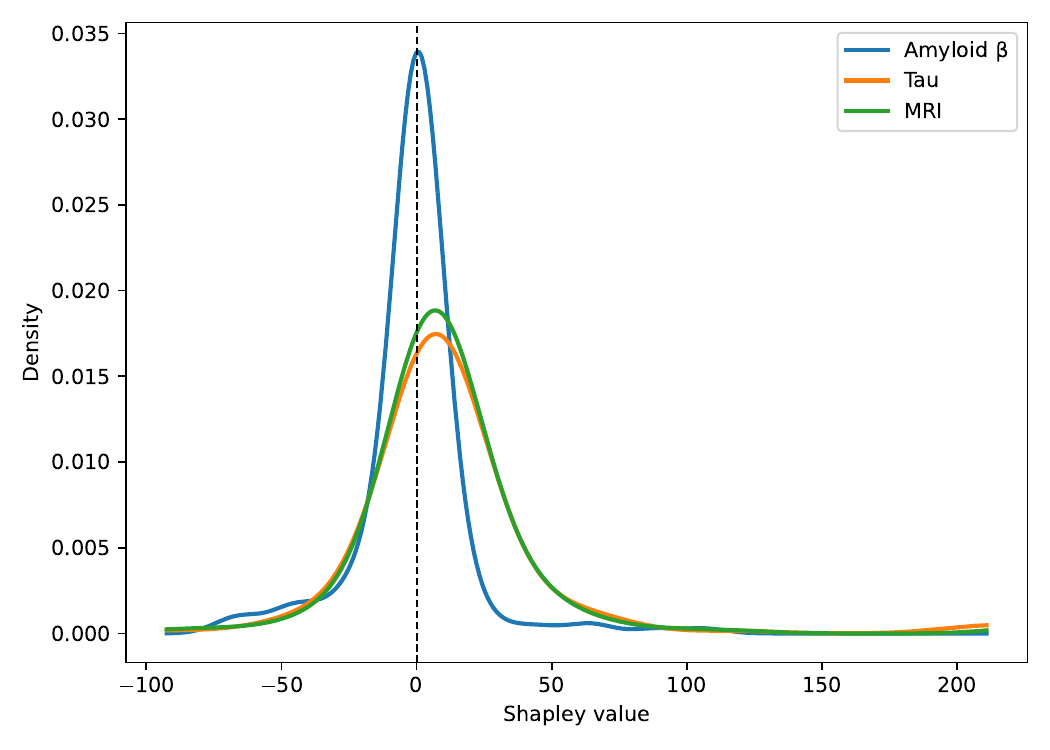}
\caption{Marginal distributions of Shapley values by modality in the ADNI experiment.}
\label{fig:shapley_density_by_modality}
\end{figure}

\section{Theoretical Proofs.}

\begin{proof}[Proof of Lemma~\ref{lem:conditional}]
We first examine the first order conditions of the convex optimization problem for some $\alpha$:
\begin{align*}
\hat h^{\tau}_{\varphi_j}
& =
\arg\min_{h \in \mathcal{H}}
\Bigg\{
\frac{1}{|\mathcal I_2|}
\sum_{i \in \mathcal I_2}
\ell_{\tau}\bigl(h_K(x_i)+\Omega(x_i)^\top \beta, \varphi_j(x_i)\bigr) \\
&+ \lambda_1 \|h_K\|_{K}^2 + \lambda_2||\beta||^2_2
\Bigg\},
\end{align*}
We have that for any fixed $h \in \mathcal{H}$,
$$0 \in \partial_\epsilon \Big (\frac{1}{m+1} \sum_{i=m+1}^{n+1} \ell_\alpha(\hat h^{1-\alpha}_{\varphi_j}(x_i) + \epsilon f(x_i), \varphi_j(x_i))+\mathcal{R}(\hat h^{1-\alpha}_{\varphi_j} + \epsilon f) \Big ) \Big |_{\epsilon = 0}$$

Then the subgradients of the pinball loss term in the direction
of $f(.)$ are:
\begin{align*}
    \left\{\frac{1}{m+1}\left( \sum_{\varphi_j(x_i) \neq \hat h^{1-\alpha}_{\varphi_j }(x_i)} f(x_i)(\alpha - \mathbf 1\{ \varphi_j(x_i) > \hat h^\alpha_{\varphi_j}(x_i)\})\ + \sum_{\varphi_j(x_i)=\hat h^{1-\alpha}_{\varphi_j }(x_i)}t_if(x_i)\right) \Big |t_i \in [\alpha-1,\alpha] \right\}
\end{align*}
Choose $t^{\alpha}_i \in [\alpha-1,\alpha]$ such that they set
the subgradient to 0. Rearranging, we obtain
\begin{align*}
    \frac{1}{m+1}\sum_{i=m+1}^{n+1} f(x_i) (\alpha-\mathbf 1\{\varphi_j(x_i)>\hat h^{1-\alpha}_{\varphi_j }(x_i)\}) = \\ \frac{1}{m+1}\sum_{\varphi_j(x_i) = \hat h^{1-\alpha}_{\varphi_j }(x_i)}(\alpha-t^\alpha_i)f(x_i) -\frac{d}{d\epsilon}R(\hat h^{1-\alpha}_{\varphi_j }+\epsilon f) \big |_{\epsilon=0}.
\end{align*}
We can relate the left-hand side of the above equation to a desired coverage guarantee as follows:
\begin{align*}
&\mathbb E\!\left[
f(x_{n+1})
\Big(
\mathbf 1\{\varphi_j(x_{n+1}) \in \hat C_j(x_{n+1})\}
- (1-\alpha)
\Big)
\right] \\
&\quad =
\mathbb E\!\Big[
f(x_{n+1})
\Big(
\alpha
- \mathbf 1\{\varphi_j(x_{n+1}) > \hat h^{1-\alpha/2}_{\varphi_j}(x_{n+1})\}
- \mathbf 1\{\varphi_j(x_{n+1}) < \hat h^{\alpha/2}_{\varphi_j}(x_{n+1})\}
\Big)
\Big].
\end{align*}

Since both $\hat h^{1-\alpha/2}_{\varphi_j }$ and $\hat h^{\alpha/2}_{\varphi_j}$ are fit symmetrically, 
$\{(f(x_i), \hat h^{\alpha/2}_{\varphi_j}, \hat h^{1-\alpha/2}_{\varphi_j } (x_i), \varphi_j(x_i))\}_{i=1}^{n+1}$ are exchangeable with each other. Thus,
\begin{multline}
\label{D1}
    \mathbb E\left[f(x_{n+1})(\alpha-\mathbf 1\{\varphi_j(x_i) >\hat h^{1-\alpha/2}_{\varphi_j }(x_{n+1})\}-\mathbf 1\{\varphi_j(x_i)<\hat h^{\alpha/2}_{\varphi_j}(x_{n+1})\})\right] \\
    = \mathbb E\Big[f(x_{n+1})(\alpha/2-\mathbf 1\{\varphi_j(x_i) >\hat h^{1-\alpha/2}_{\varphi_j }(x_{n+1})\})\ \\
    +f(x_{n+1})(\alpha/2 -\mathbf 1\{\varphi_j(x_i)<\hat h^{\alpha/2}_{\varphi_j}(x_{n+1})\})\Big] \\
    = \mathbb E\Big[f(x_{n+1})(\alpha/2-\mathbf 1\{\varphi_j(x_i) >\hat h^{1-\alpha/2}_{\varphi_j }(x_{n+1})\})\\-f(x_{n+1})(1-\alpha/2 -\mathbf 1\{\varphi_j(x_i)>\hat h^{\alpha/2}_{\varphi_j}(x_{n+1})\})\Big] \\
    = \mathbb E\Big [\frac{1}{m+1}\sum_{i=m+1}^{n+1}f(x_i)(\alpha/2-\mathbf 1\{\varphi_j(x_i) >\hat h^{1-\alpha/2}_{\varphi_j }(x_i)\}) - \\
    \frac{1}{m+1}\sum_{i=m+1}^{n+1}f(x_i)(1-\alpha/2-\mathbf 1\{\varphi_j(x_i) >\hat h^{\alpha/2}_{\varphi_j }(x_i)\})\Big ] \\
    = \mathbb E\Big [\frac{1}{m+1}\sum_{i=m+1}^{n+1}(\alpha/2 - t^{\alpha/2}_i)f(x_i)\mathbf 1\{\varphi_j(x_i)= \hat h^{1-\alpha/2}_{\varphi_j }(x_i)\}\Big ]- \\\mathbb E \Big [\frac{d}{d\epsilon}R(\hat h^{1-\alpha/2}_{\varphi_j } + \epsilon f) \Big |_{\epsilon=0} \Big] -\\
    \mathbb E\Big [\frac{1}{m+1}\sum_{i=m+1}^{n+1}(1 - \alpha/2 - t^{1-\alpha/2}_i)f(x_i)\mathbf 1\{\varphi_j(x_i)= \hat h^{\alpha/2}_{\varphi_j }(x_i)\}\Big ] + \mathbb E \Big [\frac{d}{d\epsilon}R(\hat h^{\alpha/2}_{\varphi_j } + \epsilon f) \Big |_{\epsilon=0} \Big].
\end{multline}
Since $\alpha -t^{i*}_{\alpha} \in [0,1]$, for non-negative $f$, we have lower bound
$$\frac{1}{m+1}\sum_{i=m+1}^{n+1}(\alpha - t^{i*}_{\alpha})f(x_i)\mathbf 1\{\varphi_j(x_i)= \hat h^{1-\alpha}_{\varphi_j }(x_i)\}\geq0.$$
Applying this to \eqref{D1} gives us the first part
of Lemma~\ref{lem:conditional}:
\begin{align*}
   &  \mathbb E [f(x_{n+1})(\mathbf 1\{\varphi_j(x_i)\in\hat C_j(x_{n+1})\}-(1-\alpha))] \\
   & \geq - \mathbb E \Big [\frac{d}{d\epsilon}R(\hat h^{1-\alpha/2}_{\varphi_j } + \epsilon f) \Big |_{\epsilon=0} \Big] + \mathbb E \Big [\frac{d}{d\epsilon}R(\hat h^{\alpha/2}_{\varphi_j } + \epsilon f) \Big |_{\epsilon=0} \Big].
\end{align*}

Relaxing the assumption of $f$'s non-negativity and using the exchangeability property, we also have upper bound
\begin{multline*}
\Big |\mathbb E \Big [ \frac{1}{m+1}\sum_{i=m+1}^{n+1}(\alpha/2 - t^{\alpha/2}_i)f(x_i)\mathbf 1\{\varphi_j(x_i)= \hat h^{1-\alpha/2}_{\varphi_j }(x_i)\} \Big ] - \\
\mathbb E\Big [\frac{1}{m+1}\sum_{i=m+1}^{n+1}(1 - \alpha/2 - t^{1-\alpha/2}_i)f(x_i)\mathbf 1\{\varphi_j(x_i)= \hat h^{\alpha/2}_{\varphi_j }(x_i)\}\Big ] \Big | \\ 
\leq 
\Big |\mathbb E \Big [ \frac{1}{m+1}\sum_{i=m+1}^{n+1}(\alpha/2 - t^{\alpha/2}_i)f(x_i)\mathbf 1\{\varphi_j(x_i)= \hat h^{1-\alpha/2}_{\varphi_j }(x_i)\} \Big ]\Big | + \\
\Big |\mathbb E\Big [\frac{1}{m+1}\sum_{i=m+1}^{n+1}(1 - \alpha/2 - t^{1-\alpha/2}_i)f(x_i)\mathbf 1\{\varphi_j(x_i)= \hat h^{\alpha/2}_{\varphi_j }(x_i)\}\Big ] \Big | \\
\leq \mathbb E\Big [\frac{1}{m+1} \sum_{i=m+1}^{n+1}|f(x_i)|\mathbf 1\{\varphi_j(x_i)= \hat h^{1-\alpha/2}_{\varphi_j }(x_i)\} \Big ] \ + \\
\mathbb E \Big [\frac{1}{m+1} \sum_{i=m+1}^{n+1}|f(x_i)|\mathbf 1\{\varphi_j(x_i)= \hat h^{\alpha/2}_{\varphi_j }(x_i)\}\Big ] \\
= \mathbb E[|f(x_i)| \mathbf 1\{\varphi_j(x_i)= \hat h^{1-\alpha/2}_{\varphi_j }(x_i)\}] \ + 
\mathbb E[|f(x_i)| \mathbf 1\{\varphi_j(x_i)= \hat h^{\alpha/2}_{\varphi_j }(x_i)\}],
\end{multline*}
which gives us the second part of Lemma 4.1.
\end{proof}

\begin{lemma}[Interpolation Error Bound]
Assume that $(x_i, y_i),..., (x_{n+1}, Y^{n+1}) \overset{i.i.d.}\sim P$ and that $K$ is uniformly bounded. Furthermore, for $j \in \{1,...,J\}$, suppose Assumption \ref{assumption:moment} holds and that the distribution of $\varphi_j|X$ is continuous with a uniformly bounded density. Then
for any $f \in \mathcal H$ and $j \in \{1,...,J\}$,
$$\frac{|\epsilon_{int}|}{\mathbb E_P[|f(X)|]}\leq O\Big(\frac{d\log(n)}{\lambda n}\Big )\frac{\mathbb E[\max_{1\leq i \leq n+1}|f(x_i)|]}{\mathbb E_P[|f(X)|]}$$
\end{lemma}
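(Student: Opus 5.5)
The interpolation error is controlled by counting how many calibration points the fitted quantile functions can pass through exactly, adapting the argument behind the corresponding interpolation bound in \citet{gibbs2025conformal}. By the second part of Lemma~\ref{lem:conditional}, exchangeability gives
\[
|\epsilon_{int}| \le \mathbb E\Big[\tfrac{1}{m+1}\textstyle\sum_{i}|f(x_i)|\,\mathbf 1\{\varphi_j(x_i)=\hat h^{\tau}_{\varphi_j}(x_i)\}\Big]
\]
for $\tau\in\{\alpha/2,1-\alpha/2\}$. We bound each sum by $\max_i|f(x_i)|$ times the number of interpolated points divided by $m+1$, so it suffices to show
\[
\mathbb E\big[\#\{i:\varphi_j(x_i)=\hat h^\tau_{\varphi_j}(x_i)\}\,\big|\,\text{data}\big]=O\big(d\log n/\lambda\big)
\]
with high probability, uniformly enough to pass the expectation through. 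Pulling out the maximum directly is crude, because the count and the maximum are dependent. We will handle this either by Cauchy--Schwarz or by bounding the count almost surely on a high-probability event, with the complement event contributing a negligible $O(1/n)$ term.

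The first step is a dual representation of the quantile regression problem \eqref{eqn:gibbsmin}. By the representer theorem, $\hat h_K=\sum_i \eta_i K(x_i,\cdot)/(2\lambda_1(m+1))$ and $\hat\beta = \sum_i \eta_i\Omega(x_i)/(2\lambda_2(m+1))$, with dual variables $\eta_i\in[\tau-1,\tau]$. Interpolated points are exactly those where $\eta_i$ lies strictly inside this interval or at a kink. With the RKHS penalty, the fitted function is Lipschitz-smooth in the data through the dual, so $\hat h$ is determined by a stable quadratic program.

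The second step uses continuity of $\varphi_j\mid X$ with a bounded density. Conditionally on $\hat h$ computed \emph{without} point $i$ (leave-one-out), the probability that $\varphi_j(x_i)$ lands exactly on it is zero. The interpolation event can therefore only occur through the dependence of $\hat h$ on $(x_i,\varphi_j(x_i))$ itself. We will quantify this with a stability bound: removing or perturbing one point changes $\hat h(x_i)$ by at most $O(\|K\|_\infty/(\lambda n)+\|\Omega(x_i)\|_2^2/(\lambda n))$. This follows from strong convexity of the penalty, with $\lambda=\min(\lambda_1,\lambda_2)$ and the moment bounds of Assumption~\ref{assumption:moment} controlling $\|\Omega\|_2^2\le C_1 d$ in expectation and, after a union bound, $O(d\log n)$ for the maximum. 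Point $i$ is interpolated only if $\varphi_j(x_i)$ falls within this stability radius of the leave-one-out fit. By the bounded density, that has probability $O(d\log n/(\lambda n))$. Summing over $i$ and dividing by $m+1$ gives the rate, and multiplying by $\mathbb E[\max_i|f(x_i)|]$ and normalizing by $\mathbb E_P|f(X)|$ yields the stated bound.

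The main obstacle is the stability step. Quantile regression is not strongly convex in the loss, and the $\Omega^\top\beta$ part is only ridge-regularized in $\mathbb R^d$. The perturbation bound must therefore come from the penalties alone, and it must hold uniformly over the $n+1$ points, which is where the $\log n$ enters via a maximal inequality on $\|\Omega(x_i)\|_2^2$. We will first try the standard argument that $\|\hat h - \hat h^{(-i)}\|_{\mathcal H}\le$ (Lipschitz constant of the pinball loss $\times\|\text{feature}(x_i)\|$)$/(\lambda n)$. We will then evaluate at $x_i$ using the reproducing property and boundedness of $K$. If the uniform-in-$i$ bound on $\|\Omega(x_i)\|_2^2$ needs more than the second moments in Assumption~\ref{assumption:moment}, we will absorb the excess into the $O(\cdot)$ via truncation.
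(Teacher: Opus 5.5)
Your central mechanism is the right one. Strong convexity of $\lambda_1\|h_K\|_K^2+\lambda_2\|\beta\|_2^2$ gives leave-one-out stability, and the bounded density of $\varphi_j\mid X$ turns that into a small interpolation probability. The paper itself offers no argument beyond invoking \citet{gibbs2025conformal} with Shapley values in place of conformity scores.

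\textbf{The gap is in how you assemble these pieces.} You bound the interpolation sum by $\max_i|f(x_i)|\cdot N$, with $N$ the number of interpolated points. You then need $\mathbb E[\max_i|f(x_i)|\,N]\le \mathbb E[\max_i|f(x_i)|]\cdot O(d\log n/\lambda)$. But per-point stability controls only $\mathbb E[N]$, and $N$ depends on the maximum. Neither proposed repair closes this.
\begin{itemize}
\item Cauchy--Schwarz produces $\sqrt{\mathbb E[\max_i|f(x_i)|^2]\,\mathbb E[N^2]}$. This is not the stated form, since $\sqrt{\mathbb E[\max_i|f(x_i)|^2]}$ can far exceed $\mathbb E[\max_i|f(x_i)|]$. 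It also needs a leave-two-out bound on $\mathbb E[N^2]$.
\item A high-probability bound on $N$ needs concentration for a sum of dependent indicators, which stability alone does not give. On the complement event you are left with $\mathbb E[\max_i|f(x_i)|\,\mathbf 1_{E^c}]$, which Assumption~\ref{assumption:moment} does not control.
\end{itemize}
Separately, the union-bound claim $\max_i\|\Omega(x_i)\|_2^2=O(d\log n)$ requires sub-exponential tails. Assumption~\ref{assumption:moment} gives only second moments, under which this maximum can be of order $nd$. Truncation does not recover a logarithm.

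\textbf{The fix is to never pass to the maximum.} The bound on $\epsilon_{int}$ in Lemma~\ref{lem:conditional} is already a single-point expectation. Your stability radius depends only on the removed point. Let $\hat h^{(-i)}$ be the fit with point $i$ deleted, and take $\lambda=\min(\lambda_1,\lambda_2)$ with the $1/|\mathcal I_2|$ normalization of \eqref{eqn:gibbsmin}. Since the pinball loss is $1$-Lipschitz, you get
\[
|\hat h^\tau_{\varphi_j}(x_i)-\hat h^{(-i)}(x_i)|\le r_i:=(\kappa^2+\|\Omega(x_i)\|_2^2)/(2\lambda m).
\]
Given the training fold, $\hat h^{(-i)}$ is a function of the other $m$ points, which are independent of $(x_i,\varphi_j(x_i))$. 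Condition on $x_i$ and those points, and let $\gamma$ be the density bound. Then
\[
\mathbb E\bigl[|f(x_i)|\,\mathbf 1\{\varphi_j(x_i)=\hat h^\tau_{\varphi_j}(x_i)\}\bigr]\le 2\gamma\,\mathbb E\bigl[|f(x_i)|\,r_i\bigr]\le \frac{\gamma(\kappa^2+C_2 d)}{\lambda m}\,\mathbb E_P[|f(X)|].
\]
The last step is exactly the condition $\mathbb E[|f(X)|\,\|\Omega(X)\|_2^2]\le C_2 d\,\mathbb E|f(X)|$ of Assumption~\ref{assumption:moment}.

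Summing over the two quantile levels gives $|\epsilon_{int}|/\mathbb E_P[|f(X)|]=O(d/(\lambda n))$. This implies the stated bound because $\mathbb E_P[|f(X)|]\le\mathbb E[\max_i|f(x_i)|]$. It is sharper than the rate inherited from \citet{gibbs2025conformal} because it uses the ridge penalty $\lambda_2>0$ on $\beta$. In their setting the finite-dimensional part is unpenalized, so stability in that direction cannot come from the penalty.
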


\begin{proof}[Proof of Interpolation Error Bound]
The proof directly applies the results from \citet{gibbs2025conformal} and uses the stability of RKHS regression, but replacing the non-conformal score
with the Shapley value.

\end{proof}

\begin{lemma}[RKHS Width Bound]
\label{rkhs_width_bound}
Let $\kappa = \sup_x \sqrt{K(x,x)}$. Let $C_\Omega(x) = ||\Omega(x)||_2$ be the norm
of the finite-dimensional features at test point $x$.
We also assume that Assumption~\ref{assumption:moment} holds. Then the width $W_j(x_{n+1})$ of conditional interval $\hat C_j(x_{n+1})$ with regularization parameter $\lambda$
satisfies:
$$W_j(x_{n+1}) \leq 2\left(\kappa\sqrt{\frac B \lambda} + C_\Omega(x)\sqrt\frac{B}{\lambda_2}\right).$$
\end{lemma}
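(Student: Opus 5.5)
The plan is to bound each endpoint of $\widehat C_j(x_{n+1})$ separately in absolute value and add the two bounds, so that $W_j(x_{n+1}) \le |\hat h^{1-\alpha/(2q)}_{\varphi_j}(x_{n+1})| + |\hat h^{\alpha/(2q)}_{\varphi_j}(x_{n+1})|$. Each endpoint has the form $\hat h^\tau(x) = \hat h_K^\tau(x) + \Omega(x)^\top\hat\beta^\tau$ with $\tau \in \{\alpha/(2q), 1-\alpha/(2q)\}$. The RKHS part is handled by the reproducing property and Cauchy--Schwarz: $|\hat h_K^\tau(x)| = |\langle \hat h_K^\tau, K(x,\cdot)\rangle_K| \le \sqrt{K(x,x)}\,\|\hat h_K^\tau\|_K \le \kappa\|\hat h_K^\tau\|_K$. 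The parametric part is handled by Cauchy--Schwarz in $\mathbb R^d$: $|\Omega(x)^\top\hat\beta^\tau| \le C_\Omega(x)\|\hat\beta^\tau\|_2$. It therefore suffices to show $\|\hat h_K^\tau\|_K \le \sqrt{B/\lambda_1}$ and $\|\hat\beta^\tau\|_2 \le \sqrt{B/\lambda_2}$ for each $\tau$. The statement's single $\lambda$ in the first term I read as $\lambda_1$.

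These norm bounds come from comparing the objective in \eqref{eqn:gibbsmin} at its minimizer with its value at the competitor $h \equiv 0$ (that is, $h_K = 0$, $\beta = 0$). Optimality gives
\[
\lambda_1\|\hat h_K^\tau\|_K^2 + \lambda_2\|\hat\beta^\tau\|_2^2 \le \frac{1}{|\mathcal I_2|}\sum_{i\in\mathcal I_2\cup\{n+1\}} \ell_\tau\bigl(0,\varphi_j(x_i)\bigr),
\]
because the pinball loss is nonnegative and can be dropped on the left. Since $\ell_\tau(0,v) \le \max(\tau,1-\tau)|v| \le |v|$, the right-hand side is at most an empirical average of $|\varphi_j(x_i)|$. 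Each regularization term is then bounded by this same quantity, so $\|\hat h_K^\tau\|_K \le \sqrt{\bar\varphi/\lambda_1}$ and $\|\hat\beta^\tau\|_2 \le \sqrt{\bar\varphi/\lambda_2}$, where $\bar\varphi$ denotes that empirical average.

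The main obstacle is replacing $\bar\varphi$ by $B$. Assumption~\ref{assumption:moment} bounds only the expected utility, $|\mathbb E[\mathrm{val}(X,Y;S)]| \le B$, not the pointwise Shapley values. I would first write $\varphi_j(x_i)$ as a convex combination of the marginal differences $\mathrm{val}(x_i,y_i;S\cup\{j\}) - \mathrm{val}(x_i,y_i;S)$, using that the Shapley weights sum to one. If $\mathrm{val}$ is bounded pointwise by $B$, or by $B/2$ after normalization, then $|\varphi_j(x_i)| \le B$ up to a constant factor and the bound is immediate. If only the expectation bound is available, the bound holds in expectation, or with high probability via the second-moment condition $\mathbb E|\varphi_j|^2 < \infty$ and Markov's inequality. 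I would state the result under a boundedness reading, absorbing the constants from the $m+1$ versus $m$ normalization and the factor from the marginal differences into $B$.

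Combining the two steps, each endpoint satisfies $|\hat h^\tau(x_{n+1})| \le \kappa\sqrt{B/\lambda_1} + C_\Omega(x_{n+1})\sqrt{B/\lambda_2}$. Summing over the two endpoints gives the factor $2$ in $W_j(x_{n+1}) \le 2\bigl(\kappa\sqrt{B/\lambda_1} + C_\Omega(x)\sqrt{B/\lambda_2}\bigr)$.
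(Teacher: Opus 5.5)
Your proposal is correct and follows the paper's proof essentially step for step. You compare the penalized objective at its minimizer with the zero function to get $\|\hat h_K\|_K\le\sqrt{B/\lambda_1}$ and $\|\hat\beta\|_2\le\sqrt{B/\lambda_2}$, bound each endpoint via the reproducing property and Cauchy--Schwarz, and add the two endpoint bounds to get the factor $2$.

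Your caveat about replacing the empirical average of $|\varphi_j(x_i)|$ by $B$ is well founded. The paper simply asserts $J(0)\le B$ ``because of Assumption~\ref{assumption:moment},'' but that assumption only controls $|\mathbb E[\mathrm{val}(X,Y;S)]|$. So a pointwise boundedness reading like yours, with the $(m+1)/m$ and marginal-difference constants absorbed into $B$, is what is actually needed for either version of the argument to go through.
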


\begin{proof}[Proof of Lemma \ref{rkhs_width_bound}]
    Since $\hat h^{1-\alpha}_{\varphi_j }$ minimizes
$$J(h_{\varphi_j}^{1-\alpha})=\frac{1}{m+1} \sum_{i=m+1}^{n+1} \ell_\alpha(h^{1-\alpha}_{\varphi_j}(x_i), \varphi_j(x_i))+\lambda_1||h_K||^2_K + \lambda_2||\beta||^2_2,$$
$J(\hat h^{1-\alpha}_{\varphi_j })$ must be lower
than the objective value of the zero function ($h_{\varphi_j}^{1-\alpha} = 0$). Additionally, since
$\ell_\alpha(0, \varphi_j) \leq \max(\alpha, 1-\alpha)|\varphi_j|$, because we set both $h_k$ and $\beta$ to 0, and because we assume Assumption~\ref{assumption:moment},
$$J(0)=\frac{1}{m+1} \sum_{i=m+1}^{n+1} \ell_\alpha(h^{1-\alpha}_{\varphi_j}(x_i), \varphi_j(x_i)) + 0 \leq B.$$
Then,
$$\lambda_1||\hat h_K||^2_K \leq J(\hat h) \leq B,$$ or
$$||\hat h_K||_K \leq\sqrt{\frac{B}{\lambda_1}}.$$
Similarly,
$$\lambda_2||\hat \beta||^2_2 \leq J(\hat h) \leq B,$$ or
$$||\hat \beta||_2 \leq\sqrt{\frac{B}{\lambda_2}}.$$

We exploit the reproducing property of the RKHS,
which states that $h_K(x) = \langle h_K, K(x,\cdot)\rangle_K| \leq ||h||.$
Using the Cauchy-Schwarz inequality, we get:
$$|h_K(x)| = |\langle h_K, K(x,\cdot)\rangle_K| \leq 
||h_K||_K \cdot ||K(x, \cdot)||_K.$$
Again, from the reproducing property and Cauchy-Schwarz, $$||K(x,\cdot)||_K = \sqrt{\langle K(x,\cdot), K(x,\cdot)\rangle} = \sqrt{K(x,x)} \leq \kappa.$$
So 
\begin{align*}
    |\hat h(x)| &\leq \kappa||\hat h_K||_K + ||\hat \beta||_2 \cdot ||\Omega(x)||,\\
    |\hat h(x)| &\leq \kappa\sqrt{\frac{B}{\lambda_1}} + C_\Omega(x) \sqrt{\frac{B}{\lambda_2}}.
\end{align*}
Let $\hat h^L(x)$ be the lower bound estimate, and 
let $\hat h^U(x)$ be the upper bound estimate.

Thus, $W(x) \leq |\hat h^U(x)| + |\hat h^L(x)| \leq 2(\kappa\sqrt{\frac B \lambda_1} + C_\Omega(x)\sqrt\frac{B}{\lambda_2})$.
\end{proof}

%In the case of regression, $B = Var(Y)$, so standard target to have
%unit variance

\begin{proof}[Proof of Theorem~\ref{thm:classification}] 
Let $\hat h_j^L:=\hat h _{\varphi_j}^{\frac{\alpha}{2q}}(x_{n+1})$
and let $\hat h_j^U := \hat h _{\varphi_j}^{1-\frac{\alpha}{2q}}(x_{n+1})$.
    We define failure as the case where any
    individual modality in the selected set overestimates its value, or any modality
    in the optimal set underestimates its value.
    Let $f(\cdot) = f_K(\cdot)+\Omega(\cdot)^\top \beta$ indicate the
    weighting of interest and we deconstruct $\hat h_j^L = \hat h^L_{j,K}(\cdot)+\Omega(\cdot)^\top\hat \beta^{n+1}$.
    Applying a union bound and Lemma 2, we get:
    \begin{align*}
       & \mathbb P(\underset{j \in \hat S^\alpha_q}\bigcup\varphi_j(x_{n+1}) < \hat h_j^L \underset{j \in S^*_q}\bigcup \varphi_j(x_{n+1}) > \hat h^U_j) \\
       &\leq
    \sum_{j\in \hat S^\alpha_q} \mathbb P(\varphi_j(x_{n+1}) < \hat h_j^L) + \sum_{j\in S^*_q} \mathbb P(\varphi_j(x_{n+1}) > \hat h_j^U) \\ 
    &\leq \sum_{j\in \hat S^\alpha_q}(\frac{\alpha}{2q})+
    \sum_{j\in S^*_q}(\frac{\alpha}{2q}) + 2\lambda_1 \epsilon_1+2\lambda_2 \epsilon_2 \\
    &\leq \frac{|\hat S^\alpha_q|+|S^*_q|}{2q}\alpha + 2\lambda_1\epsilon_1 + 2\lambda_2\epsilon_2\\
    &= \alpha + 2\lambda_1\epsilon_1 + 2\lambda_2\epsilon_2 +\sum_{j \in \hat S_q^\alpha}\frac{\epsilon_{int,j}}{\mathbb E_P [f(X)]} + \sum_{j \in S^*_q}\frac{\epsilon_{int,j}}{\mathbb E_P [f(X)]},
    \end{align*}
    where $$\epsilon_1=\sum_{j\in \hat S^\alpha_q} \frac{\mathbb E[\langle \hat h_{j, K}^{L}, f_K \rangle_K]}{\mathbb E_P [f(X)]}+
    \sum_{j\in S^*_q} \frac{\mathbb E[\langle \hat h_{j, K}^{L}, f_K \rangle_K]}{\mathbb E_P [f(X)]}$$
    and
    $$\epsilon_2 = \sum_{j\in \hat S^\alpha_q} \frac{\mathbb E[\langle \hat \beta^{n+1}, \beta \rangle]}{\mathbb E_P [f(X)]}+
    \sum_{j\in S^*_q} \frac{\mathbb E[\langle \hat \beta^{n+1}, \beta \rangle]}{\mathbb E_P [f(X)]},$$
    and modality interpolation errors $\epsilon_{int,j}$.

Following \citet{he2024efficient}, we choose
loss function $\ell(y, \hat y) := -\mathbf 1(y=1) \log \hat y - \mathbf 1(y=0) \log (1-\hat y)$.

Proposition 3.2 from \citet{he2024efficient} demonstrates that the utility function is
monotonically non-decreasing, meaning that a larger set of modalities
will always have a utility function as high or higher than a smaller set.

We define the Shapley value of the utility function: 
 
\begin{align*}
    \varphi_j &= \mathbb E[\varphi_j(x_{n+1})|\hat \mu] \\&=\sum_{S\subseteq[p]\setminus\{j\}}\frac{|S|!(p-|S|-1)!}{p!} \{\mathbb E[\text{val}(X_i, Y_i, S\cup J)] - \mathbb E[\text{val}(X_i, Y_i, S)]\}
\end{align*}

Now that we have these pieces, we also apply Propositions 5.1, 5.2, and 5.3
from \citet{he2024efficient}, which under Assumption~\ref{assumption:independence}, results in the following equation 
for some set $S$ of size at most $q$:
\begin{align*}
    \sum_{j\in S} I(X_{ij},Y) - q\delta \leq \sum_{j\in S} I(X_{ij},Y) - |S|\delta \leq\sum_{j\in S} \varphi_j &\leq \sum_{j\in S} I(X_{ij},Y) + |S|\delta \leq \sum_{j\in S} I(X_{ij},Y) + q\delta \\
    I(X_{i,S},Y) - 2q\delta \leq \sum_{j\in S} \varphi_j &\leq I(X_{i,S},Y) + 2q\delta \\
    \mathbb E[\text{val}(X_i, Y_i, S)] - 2q\delta \leq \sum_{j\in S} \varphi_j &\leq \mathbb E[\text{val}(X_i, Y_i, S)] + 2q\delta
    % &= \text{Var}(\mathbb E[Y|V_{S_q^\alpha}]) \\
    % &= \text{val}(S)
\end{align*}

Since the lower bound holds for all $j \in  S^*_q$ and because 
we select a subset of size at most $q$ maximizing the sum of upper confidence bounds:
$$\mathbb E[\text{val}(X_i, Y_i, \hat S^\alpha_q)] \geq \sum_{j\in \hat S^\alpha_q} \varphi_j - 2q\delta =\sum_{j\in \hat S^\alpha_q}\mathbb E[\varphi_j(x_{n+1})]-2q\delta \geq \sum_{j\in \hat S^\alpha_q} \hat h_j^L - 2q\delta$$
$$\mathbb E[\text{val}(X_i, Y_i, S^*_q)] \leq \sum_{j\in S^*_q} \varphi_j +2q\delta =\sum_{j\in S^*_q}\mathbb E[\varphi_j(x_{n+1})]+2q\delta\leq \sum_{j\in  S^*_q} \hat h_j^U + 2q\delta \leq \sum_{j\in \hat S^\alpha_q} \hat h_j^U + 2q\delta.$$

Because $\hat h_j^L= \hat h_j^U - W_j$ for all $j \in \hat S^\alpha_q$
$$\sum_{j\in \hat S_q^{\alpha}} \hat h_j^U
=
\sum_{j\in \hat S_q^{\alpha}} (\hat h_j^L + W_j)
$$
$$\sum_{j\in \hat S_q^{\alpha}} \hat h_j^L
\ge
\mathbb E[\text{val}(X_i, Y_i, S_q^*)] - \sum_{j\in \hat S_q^{\alpha}} W_j - 2q\delta.
$$
Remember that the width $W_j$ of our interval is
bounded according to Lemma~\ref{rkhs_width_bound}.
Then,

\begin{align*}
    \mathbb E[\text{val}(X_i, Y_i, \hat S^\alpha_q)] &\geq \mathbb E[\text{val}(X_i, Y_i, S^*_q)] - \mathbb \sum_{j\in \hat S^\alpha_q} W_j - 4q\delta \\
    &= \mathbb E[\text{val}(X_i, Y_i, S^*_q)] - \sum_{j\in \hat S^\alpha_q} 2(\kappa\sqrt{\frac B \lambda_1} + C_\Omega(x)\sqrt\frac{B}{\lambda_2}) - 4q\delta
\end{align*}

\end{proof}

% \begin{theorem}[Selection Consistency for Regression]
% Let $S^*$ be the optimal set of size $q$.
% Let $S^\alpha_q$ be the set of size $q$ selected
% by our algorithm for target 
% failure probability $\alpha \in (0,1)$, and
% let $\ell(y, \hat y) = (y-\hat y)^2$
% Under Assumptions~ .........,
% the following inequality holds with probability
% at least $1 - \alpha - 2\lambda \epsilon_n$:
% $$\text {val} (S^\alpha_q) \geq \text{val}(S^*)-\sum_{j\in S^*} 2(\kappa\sqrt{\frac B \lambda} + C_\Omega(x)\sqrt\frac{B}{\lambda_2})$$

% \end{theorem}

In the case of regression, we choose loss function
$\ell(y, \hat y) := (y-\hat y)^2$.

\begin{proof}[Proof of Theorem~\ref{thm:regression}] 
    The proof follows the proof of Theorem~\ref{thm:classification}, but without a
    $\delta$ term. Here we replace
    \citet{he2024efficient}'s Proposition 3.2 with their Proposition 3.5,
    and Proposition 5.1, 5.2, and 5.3 with their
    Proposition 5.5, which allows strict additivity and monotonicity
    of Shapley values $\varphi$.
\end{proof}
\end{document}